\theoremstyle{plain}
\newtheorem{theorem}{Theorem}
\title{Information-Theoretic Criteria for Knowledge Distillation in Multimodal Learning}
\author{Rongrong Xie$^1$, Yizhou Xu$^2$, Guido Sanguinetti$^1$\\
$^1$\textit{Scuola Internazionale Superiore di Studi Avanzati (SISSA)}\\
$^2$\textit{École Polytechnique Fédérale de Lausanne (EPFL)}
}
\begin{document}

\maketitle

\begin{abstract}
The rapid increase in multimodal data availability has sparked significant interest in cross-modal knowledge distillation (KD) techniques, where richer "teacher" modalities transfer information to weaker "student" modalities during model training to improve performance. However, despite successes across various applications, cross-modal KD does not always result in improved outcomes, primarily due to a limited theoretical understanding that could inform practice. To address this gap, we introduce the Cross-modal Complementarity Hypothesis (CCH): we propose that cross-modal KD is effective when the mutual information between teacher and student representations exceeds the mutual information between the student representation and the labels. We theoretically validate the CCH in a joint Gaussian model and further confirm it empirically across diverse multimodal datasets, including image, text, video, audio, and cancer-related omics data. Our study establishes a novel theoretical framework for understanding cross-modal KD and offers practical guidelines based on the CCH criterion to select optimal teacher modalities for improving the performance of weaker modalities.
\end{abstract}

\section{Introduction}

Knowledge distillation (KD) transfers knowledge from a well-performing "teacher" model  to a smaller, simpler "student" model in order to reduce computational costs at prediction time\citep{camilli2023fundamental, maillard2024bayes, gou2021knowledge, choi2023understanding, cheng2020explaining, huang2022knowledge, tang2020understanding}. In standard KD, teacher and student networks have access to the same type of input data \citep{mishra2017apprentice}; however, with the increasing availability of multimodal data, cross-modal KD has become increasingly popular \citep{liu2023multimodal}. 

Cross-modal KD enables a student network, typically operating on a less informative modality, to benefit from richer representations provided by a teacher network trained on a more informative modality \citep{gupta2016cross, dai2021learning, ahmad2024multi, nair2024let}. Such methods are particularly valuable in scenarios where richer auxiliary modalities, such as video, audio, or text, are available during training, but only a single limited modality is accessible during testing \citep{du2021improving, kim2024privacy, zhao2024crkd, radevski2022students}. Another prominent example is medical diagnostics, where costly procedures like tissue biopsies or genomic sequencing may be available for a subset of patients, while more standard analyses are available for much larger cohorts. Cross-modal KD in principle enables a teacher trained with these privileged datasets to effectively guide a student model that relies solely on routine inputs \citep{jiang2021unpaired, zhang2023multi}.

While attractive in principle, the theoretical foundations of cross-modal KD are still not well understood, and, alongside success stories, there are also reports of instances where cross-modal KD fails to improve or even degrades student performance \citep{croitoru2021teachtext, lee2023decomposed}. Previous research primarily attributes these negative effects to the modality gap, differences between modalities that obstruct knowledge transfer and result in misaligned supervisory signals \citep{yuzhe2024vexkd, ref:modality_gap1}. Various approaches have aimed to mitigate these issues through complex fusion strategies or bespoke loss functions \citep{thoker2019cross, wang2023learnable, bano2024fedcmd, ref:li2024correlation}, but the general applicability of these solutions remains unclear. 

Theoretical studies on cross-modal KD have so far been limited. \citet{ref:LUPI} introduced "privileged information," a theoretical concept demonstrating that extra training-only data can improve model robustness. Building on this idea, \citet{ref:lopez2015unifying} developed the "generalized distillation" framework, demonstrating that distilling knowledge from privileged information reduces the student's sample complexity and accelerates training convergence. More recently, \citet{ref:MFH} empirically showed that the effectiveness of cross-modal KD significantly depends on the degree of label-relevant information shared between teacher and student modalities. Despite these insights, existing research has yet to determine a quantifiable criterion for successful cross-modal KD.

To address this gap, we introduce the Cross-modal Complementarity Hypothesis (CCH), a simple criterion based on mutual information which enables the user to {\it a priori} decide on whether cross-modal KD can be successful. We prove the validity of the CCH criterion in simplified scenarios, and test it empirically across a number of data sets.
The primary contributions of this paper are as follows:
\begin{itemize}
    \item Introduction of the Cross-modal Complementarity Hypothesis (CCH), proposing conditions under which cross-modal KD yields performance gains based on mutual information criteria.
    \item Proof of the validaty of the CCH criterion in the latent (jointly) Gaussian case.
    \item Extensive empirical validation through diverse experiments on multimodal datasets, including image, text, video, audio, and cancer-related omics data, confirming the practical utility of the proposed CCH criterion and providing actionable guidance for selecting effective teacher modalities.
\end{itemize}

\section{Related work}

\subsection{Unimodal KD}
KD is a powerful technique for transferring the detailed class information learned by a large teacher model to a smaller student model. Formally, consider a supervised \(K\)-class classification problem where both teacher and student classifiers receive the same input modality \(X\) and produce logits over the \(K\) classes. Let \(z_{\theta_1}(X)\) and \(z_{\theta_2}(X)\) denote the pre‐softmax logits of the teacher and student, respectively. Given a temperature \(T\), we define the softened outputs
\[
f_{\theta_i}(X;T) = \text{softmax}\!\bigl(z_{\theta_i}(X)/T\bigr).
\]
The student is trained to minimize a weighted combination of the cross‐entropy loss with respect to the ground‐truth labels \(Y\) and the distillation loss:
\begin{equation}
  \label{eq:unimodal_kd}
  \mathcal{L}
  = (1-\lambda)\,\mathrm{CE}\bigl(Y,\,f_{\theta_2}(X;1)\bigr)
  + \lambda\,T^2\,\mathrm{KL}\bigl(f_{\theta_1}(X;T)\,\|\,f_{\theta_2}(X;T)\bigr),
\end{equation}
where \(\lambda\in[0,1]\) balances learning directly from labels with learning from the teacher's predictions. The factor \(T^2\) compensates for smaller gradients at higher temperatures, and the softened teacher outputs \(f_{\theta_1}(X;T)\) convey richer inter‐class relationships than one‐hot labels alone \citep{hinton2015distilling}.

\subsection{Cross‐Modal KD}
Cross‐modal KD generalizes the unimodal framework to heterogeneous modalities, allowing a teacher with access to a stronger modality to guide a student with a weaker one. Consider two distinct modalities, denoted by \(X_1\) and \(X_2\), processed by the teacher and student models, respectively. The training objective extends Eq.~\eqref{eq:unimodal_kd} by appropriately substituting these distinct inputs \citep{liu2021deep}:
\begin{equation}
    \mathcal{L}
= (1-\lambda)\,\mathrm{CE}\bigl(Y,\,f_{\theta_2}(X_2;1)\bigr)
+ \lambda\,T^2\,\mathrm{KL}\bigl(f_{\theta_1}(X_1;T)\,\|\,f_{\theta_2}(X_2;T)\bigr).
\label{eq:crosskd_classification}
\end{equation}

\paragraph{Modality gaps}
Cross-modal KD encounters substantial obstacles due to the inherent modality gap between the teacher and student data representations. These disparities arise because modalities like images, text, and audio capture and encode information through fundamentally distinct physical processes and mathematical formalisms  \citep{ref:hu2023teacher, ref:sarkar2024xkd, ref:wang2025cross}. Previous research indicates that modality gaps lead to both modality imbalance—the disparity in predictive power across modalities—and soft label misalignment—where the teacher's outputs do not align with the student’s feature space. Consequently, these issues severely hinder effective knowledge transfer, thereby diminishing the efficacy of distillation \citep{ref:modality_gap1}. To mitigate these challenges, several studies have framed cross-modal KD as an information-maximization problem, proposing that effective transfer is achieved by maximizing the mutual information between the teacher’s and student’s representations or outputs \citep{ref:ahn2019variational, ref:chen2021distilling, ref:shrivastava2023estimating, ref:xia2023achieving, ref:shi2024multi, ref:li2024correlation}.

\paragraph{Theoretical foundations}
\citet{ref:LUPI} introduced the concept of "privileged information" as data available only during training. This provides a theoretical reason why additional inputs—often from a different modality—can improve model robustness. This idea naturally applies to cross-modal transfer, where the teacher's modality acts as privileged information for the student. Building on this idea, later work \citet{ref:lopez2015unifying} unified knowledge distillation with the privileged information framework, providing both theoretical and causal insights. Recent hypotheses further suggest that the success of cross-modal KD largely depends on the proportion of label-relevant information shared between teacher and student modalities \citep{ref:MFH}. Another related hypothesis proposes that domain gaps mainly affect student performance through errors in non-target classes. Theoretical analyses based on VC theory show that reducing divergence in these off-target predictions improves student performance \citep{ref:chen2024non}. Despite these advances, no previous work has explicitly defined conditions based on mutual information to determine when cross-modal KD is feasible.

\section{The Cross-modal Complementarity Hypothesis}


We study cross-modal KD in settings where the teacher and student models access modalities of unequal predictive power.  Let \(X_1\) and \(X_2\) denote two data modalities whose intrinsic capacities differ, and let \(Y\) be the ground‐truth label.  Concretely, we assume \(X_1\) to be the inputs to the teacher network, i.e. the data associated with the strong modality which is highly predictive of the output labels,  while \(X_2\) is the weak modality supplied to the student. The primary goal of cross-modal KD in this context is to transfer the label-relevant representations from the strong modality $X_1$ to the weak modality $X_2$, thereby augmenting the student's performance. This raises a fundamental question: under what conditions can a teacher operating on a strong modality effectively compensate for the insufficiencies of a weak modality?

Denote $H_1,H_2$ to be the represenation of $X_1,X_2$. 
Our intuition is that if the mutual information between $H_1$ and $H_2$, denoted by $I(H_1; H_2)$, exceeds the mutual information between $H_2$ and $Y$, denoted by $I(H_2; Y)$, the first term in contains more information than the second term, and thus the teacher modality $X_1$ can provide the complementary, label-relevant information that $X_2$ lacks. Also, a large $I(H_1; H_2)$ indicates substantial overlap between the modalities, suggesting that the student is capable of interpreting the teacher’s guidance. This condition ensures that the teacher's knowledge is sufficiently aligned with the student's domain to improve prediction accuracy through distillation.

We thus propose the following \emph{Cross-modal Complementarity Hypothesis}:

\begin{quote}
\textbf{Cross-modal Complementarity Hypothesis (CCH):} For cross-modal knowledge distillation, if 
\[
I(H_1; H_2) > I(H_2; Y),
\]
then the teacher modality can supply compensatory information, leading to improved student performance, where $H_1,H_2$ are teacher and student representations,
\end{quote}



In the rest of this section, we support mathematically this intuition in a simple but tractable case.

Assume that the dataset $\{(x_{1i}, x_{2i}, y_i)\}_{i=1}^n$ is jointly Gaussian distributed:
\begin{equation}
\left\{\left(\begin{array}{c}
     x_{1i}\\
     x_{2i}\\
     y_i
\end{array}\right)\right\}_{i=1}^n\overset{i.i.d.}{\sim}\mathcal{N}\left(0,\left(\begin{array}{ccc}
     \Sigma_{11} & \Sigma_{12} & \Sigma_{13}\\
     \Sigma_{12}^T & \Sigma_{22} & \Sigma_{23}\\
     \Sigma_{13}^T & \Sigma_{23}^T & \Sigma_{33}
\end{array}\right)\right),
\label{eq:linear_model}
\end{equation}
where $x_{1i},x_{2i}\in\mathbb{R}^p$ and $y\in\mathbb{R}$. We consider the limit $n,p\to\infty$ with $\frac{n}{p}\to\kappa$ and the operation norm of each $\Sigma_{ij}\ (1\leq i,j\leq 3)$ is bounded by a constant.

The associated learning task is a multi-modal (linear) regression problem with data 
$\mathcal{D} = \{x_{1i}, x_{2i},y_i\}_{i=1}^n$. 
The outputs of the teacher and student networks for the $i$-th sample are $w_1^T x_{1i}$ and $w_2^T x_{2i}$, respectively, where $w_1$ and $w_2$ are the trainable parameters. The cross-modal  objective for training the student is given by
\begin{equation}
\hat{w}:=\arg\min_{w_2}\sum_{i=1}^n \Bigl\|y_i - w_2^T x_{2i}\Bigr\|^2 
\;+\; \lambda \sum_{i=1}^n \Bigl\|w_2^T x_{2i} - w_1^T x_{1i}\Bigr\|^2,
\label{eq:old_loss}
\end{equation}
where the first term measures the discrepancy between the ground-truth label and the student’s predictions, and the second term, weighted by $\lambda$, enforces alignment between teacher and student outputs.

The excess risk is given by
\begin{equation}
R(\lambda,w_1):=\mathbb{E}_{x_1,x_2,y}[(y-(\hat{w})^Tx_2)^2]-\sigma^2,
\end{equation}
which is regarded as a function of the teacher weights $w_1$ (with bounded norm) and the regularization strength $\lambda$. We then define $R_0:=R(0,w_1)$ to be the baseline performance, where the teacher is absent and obviously $R_0$ does not depend on $w_1$. Then we have the following theorem.
\begin{theorem}
Assume that $\kappa>1$ and $w_1^T\Sigma_{11}w_1\leq\Sigma_{33}$, $w_1^T\Sigma_{13}\geq0$. Suppose that $I(w_1^Tx_1,(w^*)^Tx_2)>I((w^*)^Tx_2,y)$, where $w^*:=\Sigma_{22}^{-1}\Sigma_{23}$ is the optimal student weight, then we have
\begin{equation}
R(\lambda,w_1)<R_0
\end{equation}
asymptotically for small $\lambda$.
\label{theo:main}
\end{theorem}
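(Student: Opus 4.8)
The plan is to compute $\hat w$ in closed form, reduce the excess risk to a $\Sigma_{22}$-weighted distance to the optimal student weight $w^*$, and show that this distance has a strictly negative $\lambda$-derivative at $\lambda=0$ under the CCH. Solving \eqref{eq:old_loss} with the sample matrices $\hat\Sigma_{22}=\tfrac1n\sum_i x_{2i}x_{2i}^T$, $\hat\Sigma_{23}=\tfrac1n\sum_i x_{2i}y_i$ and $\hat\Sigma_{21}=\tfrac1n\sum_i x_{2i}x_{1i}^T$, the normal equations give
\begin{equation}
\hat w(\lambda)=\frac{1}{1+\lambda}\,\hat\Sigma_{22}^{-1}\bigl(\hat\Sigma_{23}+\lambda\,\hat\Sigma_{21}w_1\bigr),
\end{equation}
which is well defined since $\kappa>1$ makes $\hat\Sigma_{22}$ invertible. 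Completing the square in the test risk gives $\E_{x_2,y}[(y-\hat w^T x_2)^2]=\|\hat w-w^*\|_{\Sigma_{22}}^2+(\Sigma_{33}-\Sigma_{23}^T\Sigma_{22}^{-1}\Sigma_{23})$, so up to a $\lambda$-independent constant $R(\lambda,w_1)=\|\hat w(\lambda)-w^*\|_{\Sigma_{22}}^2$; thus $R(\lambda,w_1)<R_0$ for small $\lambda>0$ reduces to showing that $\partial_\lambda\|\hat w(\lambda)-w^*\|_{\Sigma_{22}}^2$ is negative at $\lambda=0$.

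Writing $\hat w_0:=\hat\Sigma_{22}^{-1}\hat\Sigma_{23}$ and $v:=\hat\Sigma_{22}^{-1}\hat\Sigma_{21}w_1$, this derivative at $\lambda=0$ equals $2(\hat w_0-w^*)^T\Sigma_{22}(v-\hat w_0)$. Here I would use the Gaussian decompositions $y=w^{*T}x_2+\epsilon$ and $w_1^Tx_1=v^{*T}x_2+\eta$ with $v^*=\Sigma_{22}^{-1}\Sigma_{21}w_1$ and $\epsilon,\eta\perp x_2$, giving $\hat w_0-w^*=\hat\Sigma_{22}^{-1}\tfrac1n\sum_i x_{2i}\epsilon_i$ and an analogous expression for $v-w^*$. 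Conditioning on the design $X_2$ and using $\E[\epsilon_i\mid X_2]=0$, the deterministic part $v^*-w^*$ drops out and the mean derivative at $\lambda=0$ equals
\begin{equation}
2\,\E\!\Bigl[\tfrac1n\Tr(\hat\Sigma_{22}^{-1}\Sigma_{22})\Bigr]\bigl(\mathrm{Cov}(\epsilon,\eta)-\V(\epsilon)\bigr).
\end{equation}
The inverse-Wishart identity gives $\E[\tfrac1n\Tr(\hat\Sigma_{22}^{-1}\Sigma_{22})]=\tfrac{p}{\,n-p-1\,}\to\tfrac1{\kappa-1}>0$, so the sign is that of $\mathrm{Cov}(\epsilon,\eta)-\V(\epsilon)$, where $\V(\epsilon)=\Sigma_{33}-\Sigma_{23}^T\Sigma_{22}^{-1}\Sigma_{23}$ and a direct computation gives $\mathrm{Cov}(\epsilon,\eta)=w_1^T(\Sigma_{13}-\Sigma_{12}\Sigma_{22}^{-1}\Sigma_{23})=\mathrm{Cov}(w_1^Tx_1,\epsilon)$.

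It then remains to show $\mathrm{Cov}(w_1^Tx_1,\epsilon)<\V(\epsilon)$. For jointly Gaussian scalars $I=-\tfrac12\log(1-\rho^2)$, so the CCH is exactly $\rho^2(w_1^Tx_1,w^{*T}x_2)>\rho^2(w^{*T}x_2,y)$, i.e. $q^2/(as)>s/\Sigma_{33}$ with $a=w_1^T\Sigma_{11}w_1$, $q=w_1^T\Sigma_{12}\Sigma_{22}^{-1}\Sigma_{23}$ and $s=\Sigma_{23}^T\Sigma_{22}^{-1}\Sigma_{23}$. Since $w^{*T}x_2\perp\epsilon$, the covariance matrix of the triple $(w_1^Tx_1,\,w^{*T}x_2,\,\epsilon)$ is positive semidefinite, and its Schur complement with respect to the (diagonal) lower block gives $q^2/s+\mathrm{Cov}(w_1^Tx_1,\epsilon)^2/\V(\epsilon)\le a$. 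Substituting the CCH bound $q^2/s>as/\Sigma_{33}$ and then $\V(\epsilon)=\Sigma_{33}-s$ yields $\mathrm{Cov}(w_1^Tx_1,\epsilon)^2<a\,\Sigma_{33}\,(1-s/\Sigma_{33})^2$, and the norm assumption $a=w_1^T\Sigma_{11}w_1\le\Sigma_{33}$ finally forces $|\mathrm{Cov}(w_1^Tx_1,\epsilon)|<\Sigma_{33}(1-s/\Sigma_{33})=\V(\epsilon)$; the sign condition $w_1^T\Sigma_{13}\ge0$ fixes the relevant orientation of the teacher. Hence the derivative is strictly negative and $R(\lambda,w_1)<R_0$ for all sufficiently small $\lambda>0$.

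The main obstacle I anticipate is not the algebra but the high-dimensional limit: I must show that the random risk $R(\lambda,w_1)$ concentrates around the deterministic equivalent obtained from the inverse-Wishart mean, uniformly for $\lambda$ in a neighbourhood of $0$, and that differentiation commutes with $\E_{\mathcal D}$ and with the $n,p\to\infty$ limit, so that negativity of the derivative of the limiting risk curve genuinely yields $R(\lambda,w_1)<R_0$ for the finite-but-large system. The closed form, the Gaussian residual decomposition, and the Schur-complement inequality are then routine once this deterministic-equivalent bookkeeping is in place.
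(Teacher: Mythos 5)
Your proposal is correct in substance and arrives at exactly the paper's key first-order condition \eqref{eq:condition}, but by a genuinely different route. The paper rewrites \eqref{eq:old_loss} as least squares against the effective label $\bar y_i=\frac{1}{1+\lambda}(y_i+\lambda w_1^Tx_{1i})$ and imports the exact asymptotic characterization of the least-squares estimator from \citet{chang2021provable} (their Theorem~3), which yields the full deterministic equivalent $\bar R(\lambda,w_1)$ at each fixed $\lambda$; expanding in small $\lambda$ shows the gain is a variance effect (the bias term is $O(\lambda^2)$ while the effective noise $\bar\sigma^2$ decreases linearly in $\lambda$), and condition \eqref{eq:condition} is then verified by a chain of correlation inequalities. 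You instead differentiate the closed-form ridge-type solution at $\lambda=0$, compute the expected slope exactly at finite $n$ by conditioning on the design and the inverse-Wishart mean identity, recovering the same coefficient $\frac{2}{\kappa-1}\bigl(\mathrm{Cov}(\epsilon,\eta)-\V(\epsilon)\bigr)$, and verify the final inequality by a Schur-complement positivity argument on the covariance of $\bigl(w_1^Tx_1,\,w^{*T}x_2,\,\epsilon\bigr)$. Your route buys two things: it is more self-contained (an exact expectation replaces the random-matrix black box), and the Schur argument works with $q^2$ directly, so it is insensitive to the sign of $\mathrm{Cov}(w_1^Tx_1,w^{*T}x_2)$ --- whereas the paper's deduction of $\rho(w_1x_1,w^*x_2)>\rho(w^*x_2,y)$ from the MI hypothesis implicitly assumes that correlation is nonnegative, since mutual information only controls $\rho^2$. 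Indeed, your argument shows $|\mathrm{Cov}(w_1^Tx_1,\epsilon)|<\V(\epsilon)$, which makes the sign assumption $w_1^T\Sigma_{13}\ge0$ superfluous at that step.

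The one genuine gap is the one you flagged yourself: an expected derivative at $\lambda=0$ does not by itself give $R(\lambda,w_1)<R_0$ with high probability at a small fixed $\lambda$. The fix is lighter than you anticipate, however: no uniformity in $\lambda$ or interchange of limits is needed, since it suffices to compare the risk at the two fixed values $0$ and $\lambda$. Writing $R(\lambda)-R(0)=2\lambda\,(\hat w_0-w^*)^T\Sigma_{22}(v-\hat w_0)+\lambda^2 Q_n$ exactly, the linear coefficient is a Gaussian bilinear form that concentrates around its mean (Hanson--Wright conditionally on the design, together with concentration of $\frac{1}{n}\Tr(\hat\Sigma_{22}^{-1}\Sigma_{22})$, which holds for $\kappa>1$), and $Q_n$ is bounded with high probability because $\|\hat\Sigma_{22}^{-1}\|$ is. Alternatively, you may simply invoke the same external asymptotic result the paper does, which is precisely how the paper outsources this step. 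With either patch your proof is complete, and at the final inequality arguably cleaner than the paper's.
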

Note that $w_1^T\Sigma_{11}w_1\leq\Sigma_{33}$, $w_1^T\Sigma_{13}\geq0$ are mild assumptions that the teacher weights should not be too large or too misleading. Notably the optimal teacher weight $\Sigma_{11}^{-1}\Sigma_{13}$ satisfies these two assumptions.

Theorem \ref{theo:main} suggests that knowledge distillation is beneficial when the mutual information between teacher and student representations are larger than the mutual information between student representations and the teacher. It is proved in Appendix \ref{app:theory}.

\section{Experiments}

To validate the proposed Cross-modal Complementarity Hypothesis (CCH), we conducted extensive experiments across various datasets, including synthetic data, image, text, video, audio, and cancer-related omics datasets. To systematically assess how mutual information influences the effectiveness of cross-modal KD, the teacher and student networks were intentionally configured to have identical architectures in all experiments. This design choice facilitates a clear and unbiased comparison, isolating mutual information as the primary variable affecting knowledge transfer effectiveness.

\subsection{Synthetic data}\label{sec:synthetic_data}

We generate synthetic data for a regression task by drawing $n$ i.i.d.\ samples from a zero-mean multivariate Gaussian model (cf.\ Eq.~\ref{eq:linear_model}) over a teacher modality $X_1\!\in\!\mathbb{R}^{n\times p}$, a student modality $X_2\!\in\!\mathbb{R}^{n\times p}$, and a scalar target $Y\!\in\!\mathbb{R}^{n}$. To enable controlled analyses, we specialize the Gaussian model by parameterizing all cross-covariances as scalar multiples of the identity. Specifically,
\[
\Sigma_{12}=\sigma_{12} I_p,\quad
\Sigma_{13}=\sigma_{13} I_p,\quad
\Sigma_{23}=\sigma_{23} I_p,\quad
\mathrm{Var}(Y)=1,
\]
where each $\sigma_{ij}\in(-1,1)$ governs the corresponding pairwise correlation. Under this parameterization,
\begin{equation}
\left\{\begin{pmatrix}
x_{1i}\\[2pt]
x_{2i}\\[2pt]
y_i
\end{pmatrix}\right\}_{i=1}^n
\sim \mathcal{N}\!\left(
0,\;
\begin{pmatrix}
I_p & \sigma_{12} I_p & \sigma_{13}\,\mathbf{1}_p \\
\sigma_{12} I_p & I_p & \sigma_{23}\,\mathbf{1}_p \\
\sigma_{13}\,\mathbf{1}_p^{\!\top} & \sigma_{23}\,\mathbf{1}_p^{\!\top} & 1
\end{pmatrix}
\right),
\label{eq:generate_synthetic}
\end{equation}
so that $I(X_1;X_2)$, $I(X_1;Y)$, and $I(X_2;Y)$ are monotone in $\sigma_{12}$, $\sigma_{13}$, and $\sigma_{23}$, respectively.

Unless otherwise stated, we set $n=10000$ and $p=100$. To study how student performance varies with cross-modal dependence, we fix the teacher–label correlation at $\sigma_{13}=0.9$ and the student–label correlation at $\sigma_{23}=0.4$, and vary $\sigma_{12}\in[0,0.7]$ to maintain positive semidefiniteness of the covariance.

Figure~\ref{fig:synthetic_nonlinear_lambda05} summarizes the results. Panel~\ref{fig:synthetic_nonlinear_lambda05_mse} reports the student test mean squared error (MSE) as $\sigma_{12}$ varies; each point averages ten random seeds. Panel~\ref{fig:synthetic_nonlinear_lambda05_mi} shows mutual information (MI) between learned representations: $I(H_1;H_2)$ for teacher $X_1$ and student $X_2$, and $I(H_2;Y)$ for the student and the label. We extract representations $H_1$ and $H_2$ from each network's feature extractor and estimate MI using the \texttt{latentmi} estimator~\citep{ref:gowri2024approximating}.

Empirically, knowledge distillation (KD) reduces MSE precisely when $I(H_1;H_2)>I(H_2;Y)$ and provides no benefit otherwise. This pattern supports the Cross-modal Complementarity Hypothesis (CCH): the teacher contributes complementary, label-relevant information when its representation shares more information with the student than the student shares with the label. Additional experiments across distillation weights $\lambda$ (Appendix~\ref{app:synthetic}) corroborate this trend.

\begin{figure}[ht]
    \centering
    \begin{subfigure}[b]{0.45\textwidth}
        \centering
        \includegraphics[width=\linewidth]{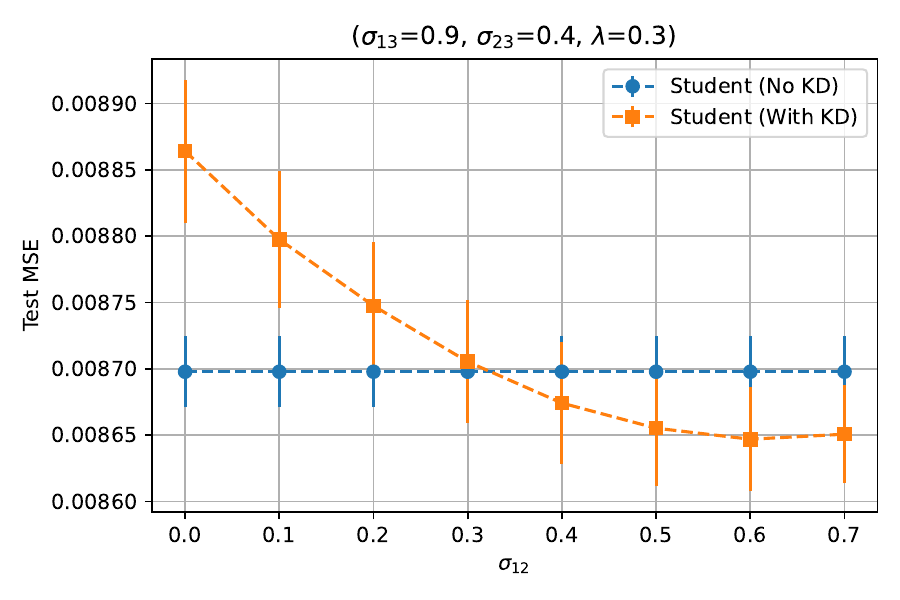}
        \caption{Student MSE vs.\ $\sigma_{12}$.}
        \label{fig:synthetic_nonlinear_lambda05_mse}
    \end{subfigure}
    \hfill
    \begin{subfigure}[b]{0.45\textwidth}
        \centering
        \includegraphics[width=\linewidth]{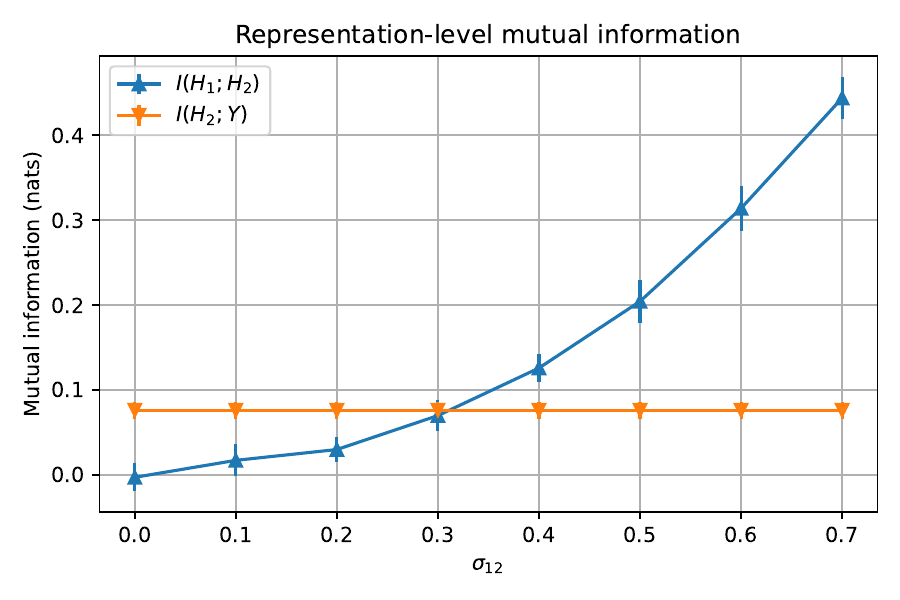}
        \caption{Representation MI vs.\ $\sigma_{12}$.}
        \label{fig:synthetic_nonlinear_lambda05_mi}
    \end{subfigure}
    \caption{Synthetic regression experiments. When $I(H_1;H_2)$ exceeds $I(H_2;Y)$, the KD-trained student achieves lower test MSE than a non-distilled student; otherwise, KD provides no improvement.}
    \label{fig:synthetic_nonlinear_lambda05}
\end{figure}

\subsection{Image data}

We conduct classification experiments on the MNIST \citep{lecun1998gradient} and MNIST-M datasets \citep{ganin2015unsupervised}. MNIST is a standard benchmark of $70{,}000$ handwritten digits (0–9), each a $28\times28$‐pixel grayscale image with a corresponding label. MNIST-M is derived by blending the binarized MNIST digits onto random natural-image patches from the BSDS500 dataset \citep{martin2001database}; thus, it represents a distinct modality while sharing identical labels with MNIST (see Figure~\ref{fig:mnist} in Appendix \ref{app:image}).

We treat MNIST as the \emph{teacher} modality and MNIST-M as the \emph{student} modality. First, we compute the mutual information between the teacher and student representations,
\(
I_{TS} \;=\; I\bigl(H_{\mathrm{MNIST}};H_{\mathrm{MNIST\text{-}M}}\bigr),
\)
and between the student represntations and labels,
\(
I_{SL} \;=\; I\bigl(H_{\mathrm{MNIST\text{-}M}};Y\bigr),
\)
using the \texttt{latentmi} estimator \citep{ref:gowri2024approximating}. We then follow the protocol in Algorithm~\ref{alg:image_protocol} (Appendix~\ref{app:image}). During distillation, we systematically vary $I_{TS}$ by applying Gaussian blur with standard deviation $\gamma$ to the teacher inputs, and assess whether the student’s accuracy gains correspond to the CCH condition $I_{TS}>I_{SL}$.

Figure \ref{fig:accuracy_mi_image} illustrates the impact of varying Gaussian blur intensity $\gamma$ on both the student’s test accuracy and the corresponding mutual information when the distillation temperature is at $T=3$ (see additional results in Appendix \ref{app:image}). Results are averaged over five independent runs. Panel~(a) compares the test accuracy of students trained with and without distillation; panel~(b) plots $I_{TS}$ and $I_{SL}$ as functions of~$\gamma$. We observe that whenever $I_{TS}>I_{SL}$, knowledge distillation improves accuracy relative to the baseline, in agreement with the CCH. For $\gamma \ge 2.5$, $I_{TS}$ falls below $I_{SL}$, leading to a collapse in the distilled student’s performance.

\begin{figure}[!t]
  \centering
  \begin{subfigure}[t]{0.48\textwidth}
    \centering
    \includegraphics[width=\linewidth]{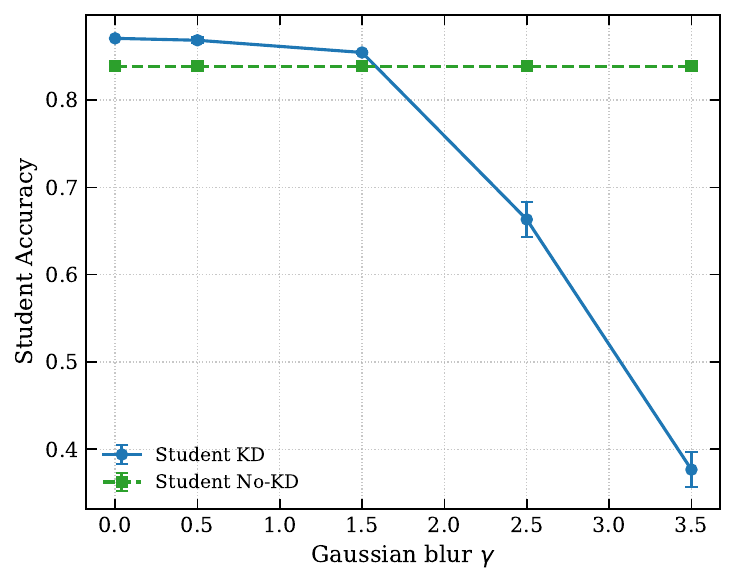}
    \caption{Student accuracy vs.\ Gaussian blur~$\gamma$.}
    \label{fig:accuracy}
  \end{subfigure}
  \hfill
  \begin{subfigure}[t]{0.48\textwidth}
    \centering
    \includegraphics[width=\linewidth]{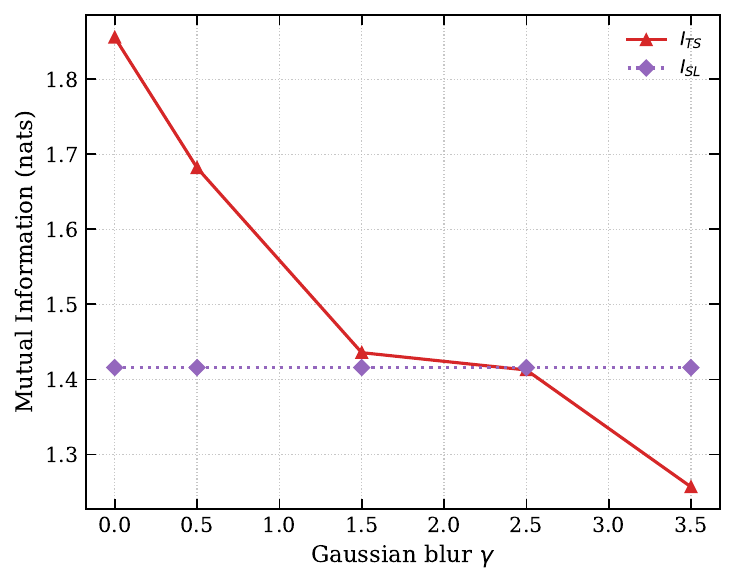}
    \caption{Mutual information vs.\ Gaussian blur~$\gamma$.}
    \label{fig:mi}
  \end{subfigure}
  \caption{Relationship between student accuracy and  mutual information under varying Gaussian blur. (\textbf{a}) Test accuracy of the MNIST--M student trained with (solid line) and without (dashed line) distillation as a function of Gaussian blur standard deviation $\gamma$ applied to MNIST teacher inputs. (\textbf{b}) Mutual information $I_{TS}=I(H_{\mathrm{MNIST}};H_{\mathrm{MNIST\text{-}M}})$ (red) and 
  $I_{SL}=I(H_{\mathrm{MNIST\text{-}M}};Y)$ (purple) 
  versus $\gamma$. Accuracy improvements align with the region where $I_{TS}>I_{SL}$. For reference, $I_{TL}=I(H_{\mathrm{MNIST}};Y)=2.0485$, and the teacher network attains a test accuracy of 0.981.}
  \label{fig:accuracy_mi_image}
\end{figure}

\begin{table}[!t]
  \centering
  \scriptsize
  \begin{threeparttable}
    \caption{Mutual‐information gap and student accuracy differ under varying blur and temperature.}
    \label{tab:accuracy_mi_diff_image}
    \begin{tabular}{@{} c  c  *{4}{c}  @{}}
      \toprule
      \multirow{2}{*}{$\gamma$}
        & \multirow{2}{*}{MI GAP (nats)}
        & \multicolumn{4}{c}{Student Acc.~Diff.\,($\pm$SE)} \\
      \cmidrule(l){3-6}
      &  & $T=1$ & $T=2$ & $T=3$ & $T=4$ \\
      \midrule
      0.0 &  0.4399   & $0.0010\pm0.0040$   & $0.0146\pm0.0035$   & $0.0318\pm0.0040$   & $0.0350\pm0.0046$   \\
      0.5 &  0.2662   & $0.0069\pm0.0054$   & $0.0152\pm0.0055$   & $0.0296\pm0.0031$   & $0.0353\pm0.0028$   \\
      1.5 &  0.0199   & $0.0002\pm0.0089$   & $0.0149\pm0.0034$   & $0.0156\pm0.0042$   & $0.0091\pm0.0051$   \\
      2.5 & $-0.0032$ & $-0.1190\pm0.0165$  & $-0.1627\pm0.0101$  & $-0.1757\pm0.0219$  & $-0.1516\pm0.0154$  \\
      3.5 & $-0.1590$ & $-0.2797\pm0.0126$  & $-0.4597\pm0.0041$  & $-0.4623\pm0.0209$  & $-0.4364\pm0.0137$  \\
      \bottomrule
    \end{tabular}  
  \end{threeparttable}
\end{table}

We further explore the effect of the distillation temperature $T\in\{1,2,3,4\}$ in Table~\ref{tab:accuracy_mi_diff_image}. Here, \emph{MI GAP} denotes $I_{TS}-I_{SL}$, and \emph{Student Acc.~Diff.} is the difference in test accuracy between the distilled and baseline students. SE denotes the standard error estimated from five independent runs. Across all blur levels and temperatures, the sign of the \emph{Student Acc.~Diff.} matches that of the \emph{MI GAP}, reinforcing the CCH. We remark the very non-linear behaviour of the student's accuracy w.r.t. the MI GAP; while the gain remains modest for positive MI GAP, as soon as the MI GAP changes sign we document a very large drop in student accuracy.

\subsection{CMU-MOSEI dataset}

We evaluate the CCH on the CMU Multimodal Opinion Sentiment and Emotion Intensity (CMU-MOSEI) dataset \citep{zadeh2018multimodal}. CMU-MOSEI is a large-scale benchmark for multimodal sentiment analysis comprising 23{,}453 annotated video segments with time-aligned text, vision, and audio streams drawn from 1{,}000 speakers across 250 topics.

\begin{table}[!t]
\centering
\setlength{\tabcolsep}{6pt}
\renewcommand{\arraystretch}{1.1}
\caption{Mutual information estimates between CMU-MOSEI modality representations and the label using three estimators (mean $\pm$~std over 50 runs).}
\label{tab:mi}
\resizebox{\linewidth}{!}{%
\begin{tabular}{lccccc}
\toprule
Estimator & $I(H_{\mathrm{text}};H_{\mathrm{vision}})$ & $I(H_{\mathrm{text}};H_{\mathrm{audio}})$ & $I(H_{\mathrm{text}};Y)$ & $I(H_{\mathrm{vision}};Y)$ & $I(H_{\mathrm{audio}};Y)$ \\
\midrule
\texttt{latentmi} & $1.3543\pm0.0052$ & $1.4160\pm0.0038$ & $0.4681\pm0.0090$ & $0.0816\pm0.0084$ & $0.1054\pm0.0088$ \\
\texttt{mine}     & $0.7955\pm0.0019$ & $1.1817\pm0.0023$ & $0.3202\pm0.0055$ & $0.0409\pm0.0026$ & $0.0631\pm0.0026$ \\
\texttt{ksg}      & $0.3788\pm0.0056$ & $0.6606\pm0.0056$ & $0.1628\pm0.0083$ & $0.0647\pm0.0014$ & $0.0934\pm0.0018$ \\
\bottomrule
\end{tabular}%
}
\end{table}

\begin{table}[!t]
\centering
\setlength{\tabcolsep}{6pt}
\renewcommand{\arraystretch}{1.15}
\caption{Student performance versus mutual information on CMU-MOSEI with text as teacher. The teacher achieves test accuracy $0.7190 \pm 0.0098$ and weighted F1 $0.7189 \pm 0.0098$; $I(H_{\mathrm{text}};Y)=0.4681 \pm 0.0090$. Mutual information is estimated with \texttt{latentmi}.}
\label{tab:mosei_kd}
\resizebox{\linewidth}{!}{%
\begin{tabular}{l cc cc cc}
\toprule
& \multirow{2}{*}{\centering $I(H_{\mathrm{teacher}};H_{\mathrm{student}})$}
& \multirow{2}{*}{\centering $I(H_{\mathrm{student}};Y)$}
& \multicolumn{2}{c}{Student Without KD}
& \multicolumn{2}{c}{Student With KD} \\
\cmidrule(lr){4-5}\cmidrule(lr){6-7}
& & & Acc & Weighted F1 & Acc & Weighted F1 \\
\midrule
\makecell[l]{Text (teacher)\\ Vision (student)}
& $1.3543 \pm 0.0052$ & $0.0816 \pm 0.0084$
& $0.6233 \pm 0.0027$ & $0.6204 \pm 0.0030$
& $0.6343 \pm 0.0013$ & $0.6315 \pm 0.0022$ \\
\hline
\makecell[l]{Text (teacher)\\ Audio (student)}
& $1.4160 \pm 0.0038$ & $0.1054 \pm 0.0088$
& $0.5937 \pm 0.0048$ & $0.5931 \pm 0.0043$
& $0.6167 \pm 0.0030$ & $0.6161 \pm 0.0031$ \\
\bottomrule
\end{tabular}%
}
\end{table}

\begin{table}[!t]
  \caption{Student weighted F1 versus mutual information on the CMU-MOSEI dataset under varying levels of Gaussian noise (text teacher, vision student).}
  \label{tab:noise_mosei_kd}
  \centering
  \small
  \begin{tabular}{lcccc}
    \toprule
    Noise level & $I(H_{\mathrm{teacher}};H_{\mathrm{student}})$ & $I(H_{\mathrm{student}};Y)$ & Student KD F1 & Student No-KD F1 \\
    \midrule
    0\%  & $1.3543\pm0.0052$ & $0.0816\pm0.0084$ & $0.6204\pm0.0030$ & $0.6315\pm0.0022$ \\
    20\% & $0.0034\pm0.0040$ & $0.0816\pm0.0084$ & $0.6204\pm0.0030$ & $0.6192\pm0.0062$ \\
    40\% & $-0.0007\pm0.0045$ & $0.0816\pm0.0084$ & $0.6204\pm0.0030$ & $0.6189\pm0.0039$ \\
    60\% & $-0.0056\pm0.0058$ & $0.0816\pm0.0084$ & $0.6204\pm0.0030$ & $0.6184\pm0.0022$ \\
    80\% & $-0.0060\pm0.0053$ & $0.0816\pm0.0084$ & $0.6204\pm0.0030$ & $0.6156\pm0.0033$ \\
    \bottomrule
  \end{tabular}
\end{table}

The task is binary sentiment classification. Following standard practice, we binarize the original integer sentiment scores into positive and negative labels. Each utterance is converted into synchronized, fixed-length sequences for all three modalities using a uniform preprocessing pipeline; full details are provided in Appendix~\ref{app:mosei}.

To operationalize the CCH, we estimate mutual information (MI) between (i) each pair of modality representations and (ii) each modality representation and the label. We employ three complementary estimators—\texttt{latentmi} \citep{ref:gowri2024approximating}, \texttt{mine} \citep{ref:belghazi2018mutual}, and \texttt{ksg} \citep{ref:ross2014mutual}—and average results over 50 independent runs (Appendix~\ref{app:mi_estimators}). As shown in Table~\ref{tab:mi}, absolute MI values vary by estimator, but the relative ordering is consistent.

The MI patterns in Table~\ref{tab:mi} identify text as the most predictive modality, since $I(H_{\mathrm{text}};Y)$ is largest. Accordingly, we designate text as the teacher and treat vision and audio as student modalities. As reported in Table~\ref{tab:mosei_kd}, KD yields significant gains over the no-KD baseline for both students. Moreover, Table~\ref{tab:mi} shows that $I(H_{\mathrm{text}};H_{\mathrm{vision}}) > I(H_{\mathrm{vision}};Y)$ and $I(H_{\mathrm{text}};H_{\mathrm{audio}}) > I(H_{\mathrm{audio}};Y)$, satisfying the CCH condition. Taken together, these observations support the CCH. The improvement is larger for audio, consistent with its greater MI gap $I_{\mathrm{TS}}-I_{\mathrm{SL}}$ (teacher–student vs. student–label MI of representations), suggesting a positive association between the gap magnitude and KD efficacy.

To further probe the CCH, we conduct a controlled degradation experiment on the text (teacher) $\rightarrow$vision (student) setting. We inject Gaussian noise into the teacher input to systematically reduce $I(H_{\mathrm{teacher}};H_{\mathrm{student}})$ while holding $I(H_{\mathrm{student}};Y)$ fixed. As predicted, the benefit of KD disappears once $I(H_{\mathrm{teacher}};H_{\mathrm{student}}) < I(H_{\mathrm{student}};Y)$ (Table~\ref{tab:noise_mosei_kd}).

\subsection{Cancer data}

We analyze three The Cancer Genome Atlas (TCGA) cohorts~\citep{colaprico2016tcgabiolinks}: breast invasive carcinoma (BRCA), pan-kidney (KIPAN), and liver hepatocellular carcinoma (LIHC). For each cohort, we consider three omics modalities—mRNA expression (mRNA), copy number variation (CNV), and reverse-phase protein arrays (RPPA)—and retain only cases with complete data across all three. The learning task is subtype classification; Table~\ref{tab:subtypes} in Appendix~\ref{app:cancer} reports class distributions. To reduce noise and dimensionality, we preprocess each modality independently and select the top 100 features from the original sets of 60{,}660 (mRNA), 60{,}623 (CNV), and 487 (RPPA) using the minimum-redundancy maximum-relevance (mRMR) criterion~\citep{ding2005minimum}.

\begin{table}[!t]
\centering
\caption{Student weighted F1 vs.\ mutual information on BRCA under varying Gaussian noise levels (\emph{teacher:} mRNA; \emph{student:} CNV). The teacher achieves test weighted F1 of $0.7459$ and $I(H_{\mathrm{teacher}};Y)=1.1081$. ``MI Gap'' denotes $I_{\mathrm{TS}}-I_{\mathrm{SL}}$; ``Student F1 Difference'' denotes (Student KD F1) $-$ (Student No-KD F1).}
\resizebox{0.98\textwidth}{!}{%
\begin{tabular}{ccccccc}
\toprule
Noise Level & $I(H_{\mathrm{teacher}};H_{\mathrm{student}})$ & $I(H_{\mathrm{student}};Y)$ & Student KD F1 & Student No-KD F1 & MI GAP & Student F1 Differ \\
\midrule
0\%   & 0.5005 & 0.2757 & 0.5038 & 0.4561 & 0.2248 & 0.0477 \\
20\% & 0.4554 & 0.2757 & 0.4917 & 0.4561 & 0.1797 & 0.0356 \\
40\% & 0.3687 & 0.2757 & 0.4953 & 0.4561 & 0.0930 & 0.0392 \\
60\% & 0.2147 & 0.2757 & 0.4276 & 0.4561 & -0.061 & -0.0285 \\
80\% & 0.1325 & 0.2757 & 0.4343 & 0.4561 & -0.1432 & -0.0218 \\
\bottomrule
\end{tabular}%
}
\label{tab:brca_mRNAcnv}
\end{table}

\begin{table}[!t]
\centering
\caption{Student weighted F1 vs.\ mutual information on KIPAN under varying Gaussian noise levels (\emph{teacher:} mRNA; \emph{student:} CNV). The teacher achieves test weighted F1 of $0.9516$ and $I(H_{\mathrm{teacher}};Y)=1.0458$.}
\resizebox{0.98\textwidth}{!}{%
\begin{tabular}{ccccccc}
\toprule
Noise Level & $I(H_{\mathrm{teacher}};H_{\mathrm{student}})$ & $I(H_{\mathrm{student}};Y)$ & Student KD F1 & Student No-KD F1 & MI GAP & Student F1 Differ \\
\midrule
0\%   & 0.7898 & 0.6994 & 0.8826 & 0.8667 & 0.0904 & 0.0159 \\
20\% & 0.7198 & 0.6994 & 0.8721 & 0.8667 & 0.0204 & 0.0054 \\
40\% & 0.6771 & 0.6994 & 0.8517 & 0.8667 & -0.0223 & -0.0150 \\
60\% & 0.6209 & 0.6994 & 0.8477 & 0.8667 & -0.0785 & -0.0190 \\
80\% & 0.6389 & 0.6994 & 0.8544 & 0.8667 & -0.0605 & -0.0123 \\
\bottomrule
\end{tabular}%
}
\label{tab:kipan_mRNAcnv}
\end{table}

\begin{table}[!t]
\centering
\caption{Student weighted F1 vs.\ mutual information on LIHC under varying Gaussian noise levels (\emph{teacher:} mRNA; \emph{student:} CNV). The teacher achieves test weighted F1 of $0.9430$ and $I(H_{\mathrm{teacher}};Y)=0.9055$.}
\resizebox{0.98\textwidth}{!}{%
\begin{tabular}{ccccccc}
\toprule
Noise Level & $I(H_{\mathrm{teacher}};H_{\mathrm{student}})$ & $I(H_{\mathrm{student}};Y)$ & Student KD F1 & Student No-KD F1 & MI GAP & Student F1 Differ \\
\midrule
0\%   & 0.0914 & 0.0781 & 0.5795 & 0.5548 & 0.0133 & 0.0247 \\
20\% & 0.0825 & 0.0781 & 0.5692 & 0.5548 & 0.0044 & 0.0144 \\
40\% & 0.0699 & 0.0781 & 0.5368 & 0.5548 & -0.0082 & -0.0180 \\
60\% & 0.0736 & 0.0781 & 0.5259 & 0.5548 & -0.0045 & -0.0289 \\
80\% & 0.0409 & 0.0781 & 0.5080 & 0.5548 & -0.0372 & -0.0468 \\
\bottomrule
\end{tabular}%
}
\label{tab:lihc_mRNAcnv}
\end{table}

We first set mRNA as the teacher and CNV as the student and estimate
\[
I_{\mathrm{TS}} \!=\! I\!\bigl(H_{\mathrm{mRNA}};H_{\mathrm{CNV}}\bigr),
\qquad
I_{\mathrm{SL}} \!=\! I\!\bigl(H_{\mathrm{CNV}};Y\bigr),
\]
using the \texttt{latentmi} estimator. To modulate $I_{\mathrm{TS}}$, we add zero-mean Gaussian noise to the teacher inputs. Tables~\ref{tab:brca_mRNAcnv}–\ref{tab:lihc_mRNAcnv} report student weighted F1 and mutual information as functions of the noise level (means over five runs). Across cohorts, whenever the MI Gap is positive ($I_{\mathrm{TS}} > I_{\mathrm{SL}}$), distillation improves the student’s weighted F1; when the gap becomes negative, the benefit vanishes or reverses, in line with the CCH.

To extend from single-student distillation to multimodal learning, we compare two fusion strategies—direct fusion and fusion with knowledge distillation (Fusion+KD; Fig.~\ref{fig:fusion}). On KIPAN (Table~\ref{tab:fusion-kd-results}; additional results in Appendix~\ref{app:cancer}), mRNA as teacher yields $I_{\mathrm{TS}} > I_{\mathrm{SL}}$ and Fusion+KD outperforms direct fusion. In contrast, with RPPA as teacher we have $I_{\mathrm{TS}} < I_{\mathrm{SL}}$, and direct fusion is superior. These results suggest a practical design rule: incorporate KD in fusion only when $I_{\mathrm{TS}} > I_{\mathrm{SL}}$.

\begin{figure}[!t]
  \centering
  \begin{subfigure}[t]{0.48\textwidth}
    \centering
    \includegraphics[width=\linewidth]{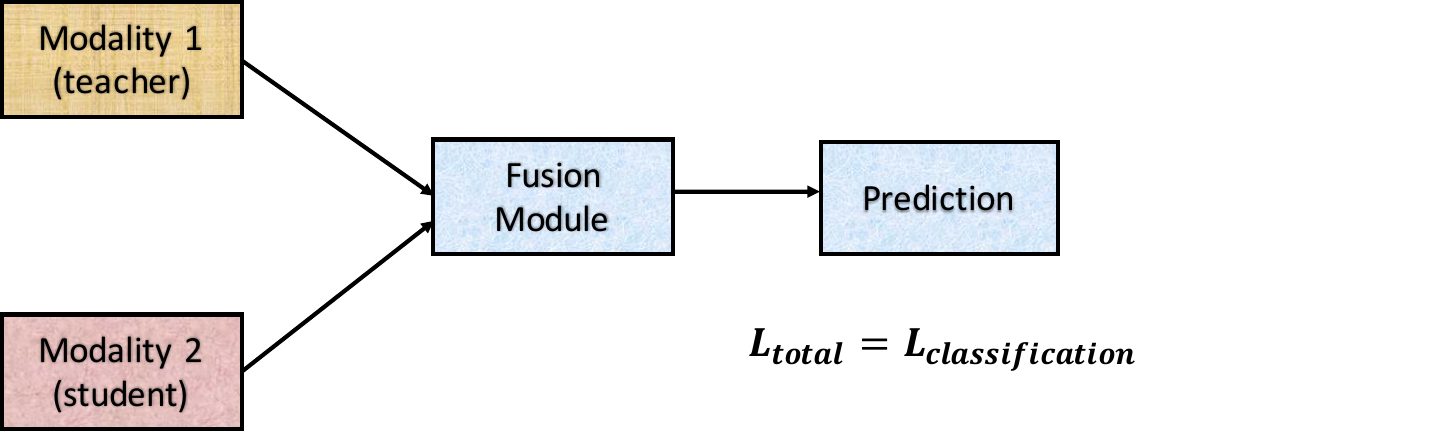}
    \caption{Direct fusion}
  \end{subfigure}\hfill
  \begin{subfigure}[t]{0.48\textwidth}
    \centering
    \includegraphics[width=\linewidth]{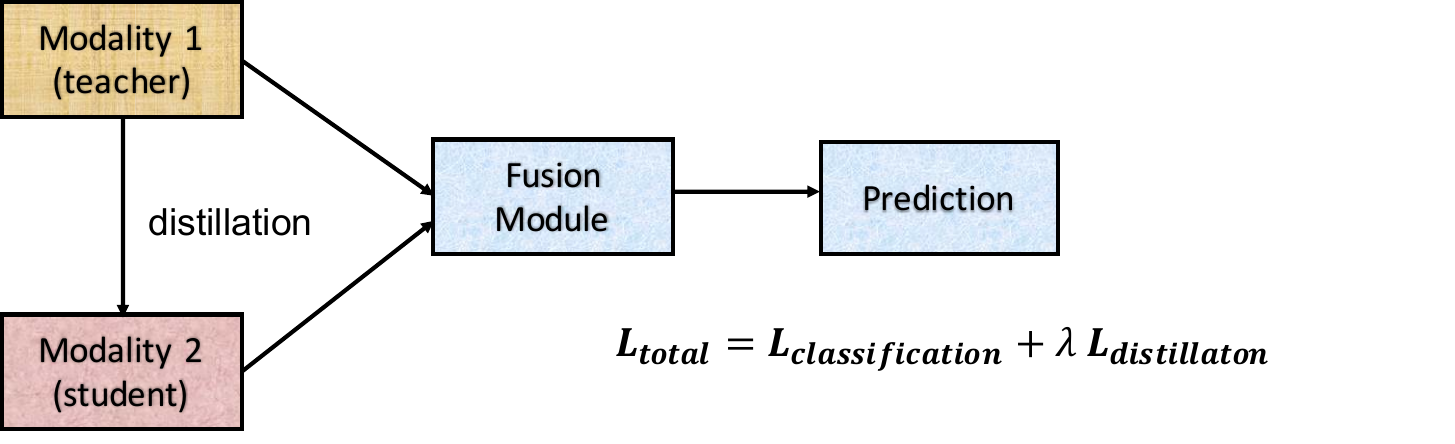}
    \caption{Fusion with KD}
  \end{subfigure}
  \caption{Multimodal fusion architectures: direct fusion (left) and Fusion+KD (right).}
  \label{fig:fusion}
\end{figure}

\begin{table}[!t]
  \centering
  \scriptsize
  \caption{Overall multimodal performance of direct fusion and Fusion+KD on KIPAN, reported with mutual information of modality representations (teacher–label, teacher–student, student–label).}
  \label{tab:fusion-kd-results}
  \setlength\tabcolsep{4pt}
  \resizebox{\textwidth}{!}{%
    \begin{tabular}{lccccccccccc}
      \toprule
       & \multicolumn{3}{c}{Mutual Information} 
       & \multicolumn{4}{c}{Fusion} 
       & \multicolumn{4}{c}{Fusion+KD} \\
      \cmidrule(lr){2-4}\cmidrule(lr){5-8}\cmidrule(lr){9-12}
       & Teacher–Label & Teacher–Student & Student–Label 
       & Acc & AUC & Macro F1 & Weighted F1 
       & Acc & AUC & Macro F1 & Weighted F1 \\
      \midrule
      mRNA (teacher)\\ CNV (student)
       & 1.0458 & 0.7898 & 0.6994 
       & 0.9610 & 0.9851 & 0.9219 & 0.9591 
       & 0.9740 & 0.9872 & 0.9293 & 0.9725 \\
        \hline
      RPPA (teacher)\\ CNV (student)
       & 1.1609 & 0.6893 & 0.6994 
       & 0.9740 & 0.9995 & 0.9333 & 0.9721 
       & 0.9610 & 0.9971 & 0.9225 & 0.9595 \\
      \bottomrule
    \end{tabular}%
  }
\end{table}

\section{Conclusion}
This paper introduced the Cross-modal Complementarity Hypothesis (CCH), a framework for explaining when cross-modal knowledge distillation (KD) improves performance in multimodal learning. The CCH offers a tractable, \emph{a priori} criterion for success: distillation is beneficial when the mutual information between teacher and student representations exceeds that between the student representation and the labels. We validated the hypothesis with a theoretical analysis in a joint Gaussian model and with experiments spanning synthetic Gaussian data and diverse real-world modalities—image, text, video, and audio—as well as three cancer omics datasets.

Our results highlight mutual information as a reliable predictor of cross-modal KD efficacy, yielding both theoretical insight and practical guidance for selecting teacher modalities to strengthen weaker ones. 

\newpage
\section*{Reproducibility Statement}
The source code underpinning the experiments and analyses presented in this manuscript has been made accessible via an anonymized GitHub repository:
\begin{center}
  \url{https://anonymous.4open.science/r/test-111/}.
\end{center}
All experiment details are presented in Appendices \ref{app:synthetic}-\ref{app:mi_estimators}.
\bibliographystyle{plainnat}
\bibliography{references} 

\newpage
\appendix

\section{Theoretical analysis}
\label{app:theory}
Here we prove a more complete version of Theorem \ref{theo:main}.
\begin{theorem}
For $\kappa>1$ and almost every $\lambda$, there exists $w_1$ such that $R(\lambda,\tilde{w})<R(\lambda,0)$ asymptotically. Moreover, for $\lambda$ small enough, we have $R(\lambda,\tilde{w})<R_0$ asymptotically if $w_1^T\Sigma_{11}w_1\leq\Sigma_{33},w_1^T\Sigma_{13}\geq0$ and $I(w_1^Tx_1,(w^*)^Tx_2)>I((w^*)^Tx_2,y)$.
\label{theo:smaller_risk}
\end{theorem}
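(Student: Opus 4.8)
The plan is to prove the inequality by showing that the leading coefficient of $R(\lambda,w_1)$ in $\lambda$ at $\lambda=0$ is strictly negative under the hypotheses; since $R(\lambda,w_1)=R_0+\lambda\,\partial_\lambda R(0,w_1)+O(\lambda^2)$, this yields $R(\lambda,w_1)<R_0$ for all small $\lambda>0$. First I would solve \eqref{eq:old_loss} in closed form: its normal equations read $(1+\lambda)\,X_2^{\top}X_2\,\hat w=X_2^{\top}Y+\lambda X_2^{\top}X_1 w_1$, so that $\hat w(\lambda)=\tfrac{1}{1+\lambda}\hat w_{\mathrm{OLS}}+\tfrac{\lambda}{1+\lambda}\hat b$, where $\hat w_{\mathrm{OLS}}=(X_2^{\top}X_2)^{-1}X_2^{\top}Y$ and $\hat b=(X_2^{\top}X_2)^{-1}X_2^{\top}X_1 w_1$. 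The assumption $\kappa>1$ enters exactly here: it guarantees that $X_2^{\top}X_2$ is invertible asymptotically, so $\hat w_{\mathrm{OLS}}$ and the whole path $\hat w(\lambda)$ are well defined and the minimizer is unique.

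Next I would pass to weight space. Writing $y=(w^*)^{\top}x_2+\epsilon$ with $\epsilon\perp x_2$ and $\sigma^2=\mathrm{Var}(\epsilon)=\Sigma_{33}-\Sigma_{23}^{\top}\Sigma_{22}^{-1}\Sigma_{23}$ (the quantity subtracted in $R$), the excess risk equals $R(\lambda,w_1)=\mathbb{E}[(w^*-\hat w)^{\top}\Sigma_{22}(w^*-\hat w)]$. Differentiating with $\hat w(0)=\hat w_{\mathrm{OLS}}$ and $\partial_\lambda\hat w(0)=\hat b-\hat w_{\mathrm{OLS}}$ gives $\partial_\lambda R(0,w_1)=-2\,\mathbb{E}[(w^*-\hat w_{\mathrm{OLS}})^{\top}\Sigma_{22}(\hat b-\hat w_{\mathrm{OLS}})]$. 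To evaluate this I would condition on $X_2$: using $w^*-\hat w_{\mathrm{OLS}}=-(X_2^{\top}X_2)^{-1}X_2^{\top}\mathcal E$ (with $\mathcal E$ the vector of residuals $\epsilon_i$) and the key conditional identity $\mathbb{E}[\epsilon_i x_{1j}^{\top}\mid X_2]=\delta_{ij}\,c^{\top}$, where $c=\Sigma_{13}-\Sigma_{12}w^*=\mathrm{Cov}(x_1,\epsilon)$, every cross term collapses onto the diagonal and the conditional expectation reduces to $\mathrm{Tr}(\Sigma_{22}(X_2^{\top}X_2)^{-1})\,(\sigma^2-c^{\top}w_1)$. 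Averaging over $X_2$ with the inverse-Wishart identity $\mathbb{E}[(X_2^{\top}X_2)^{-1}]=\Sigma_{22}^{-1}/(n-p-1)$ yields $\partial_\lambda R(0,w_1)=-2\tau(\sigma^2-c^{\top}w_1)$ with $\tau=p/(n-p-1)\to 1/(\kappa-1)>0$. Hence the sign is governed entirely by whether $c^{\top}w_1<\sigma^2$.

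It then remains to deduce $c^{\top}w_1<\sigma^2$ from the CCH and the two mild assumptions, and here the decisive move is a reparametrization in terms of the scalar Gaussians $U=w_1^{\top}x_1$, $V=(w^*)^{\top}x_2$, $\epsilon=y-V$. Set $b=\mathrm{Var}(U)=w_1^{\top}\Sigma_{11}w_1$, $s=\mathrm{Var}(V)=\Sigma_{23}^{\top}\Sigma_{22}^{-1}\Sigma_{23}$, $a=\mathrm{Cov}(U,V)=w_1^{\top}\Sigma_{12}w^*$, and $e:=c^{\top}w_1=\mathrm{Cov}(U,\epsilon)$; note $\mathrm{Var}(\epsilon)=\sigma^2=\Sigma_{33}-s$ and $\mathrm{Cov}(V,\epsilon)=0$, so the target is simply $e<\sigma^2$. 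Since Gaussian mutual information is increasing in the squared correlation, the hypothesis $I(U,V)>I(V,y)$ is equivalent to $a^2/(bs)>s/\Sigma_{33}$, i.e. $a^2>bs^2/\Sigma_{33}$, whence $bs-a^2<bs\,\sigma^2/\Sigma_{33}$. Positive semidefiniteness of the covariance of $(U,V,\epsilon)$ gives the determinant inequality $\sigma^2(bs-a^2)-se^2\ge0$. Chaining these, $se^2\le\sigma^2(bs-a^2)<bs\,\sigma^4/\Sigma_{33}$, so $e^2<b\,\sigma^4/\Sigma_{33}\le\sigma^4$ by the assumption $b=w_1^{\top}\Sigma_{11}w_1\le\Sigma_{33}$. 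Therefore $|e|<\sigma^2$, in particular $e<\sigma^2$, and $\partial_\lambda R(0,w_1)<0$ as required; the condition $w_1^{\top}\Sigma_{13}\ge0$ is the mild non-adversariality requirement noted after the theorem.

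The algebra above is clean once the $(U,V,\epsilon)$ reparametrization is in place, so the main obstacle I anticipate is not algebraic but asymptotic. Because $R$ is a random functional of the training design, I must justify that it concentrates on its expectation in the proportional limit (standard concentration for the quadratic forms in $\mathcal E$ and for $\mathrm{Tr}(\Sigma_{22}(X_2^{\top}X_2)^{-1})$), that differentiation in $\lambda$ commutes with the $n,p\to\infty$ limit, and that the $O(\lambda)$ term genuinely dominates the $O(\lambda^2)$ bias uniformly for small $\lambda$; this interchange of limits and the uniform control are the delicate technical points. Finally, for the first assertion (existence of a beneficial $w_1$ for almost every $\lambda$) I would note that $R(\lambda,\cdot)$ is a convex quadratic in $w_1$ and compute $\nabla_{w_1}R(\lambda,0)$; this gradient is analytic in $\lambda$ and, provided the cross-covariances $\Sigma_{12},\Sigma_{13}$ do not degenerate, not identically zero, hence nonzero off a measure-zero set, so $w_1=0$ is not stationary and some teacher strictly improves on the null teacher $R(\lambda,0)$.
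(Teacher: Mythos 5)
Your proposal is correct, and it recovers exactly the paper's small-$\lambda$ condition (Eq.~\eqref{eq:condition}): your $\partial_\lambda \mathbb{E}R(0,w_1)=-2\tau(\sigma^2-c^Tw_1)$ with $\tau\to 1/(\kappa-1)$ is the same linear coefficient $\tfrac{2\lambda}{\kappa-1}\bigl[w_1^T(\Sigma_{13}-\Sigma_{12}w^*)-(\Sigma_{33}-(w^*)^T\Sigma_{22}w^*)\bigr]$ that the paper extracts from its expansion of $\bar R(\lambda,w_1)$. The routes differ at both key steps, though. For the risk asymptotics, the paper rewrites the objective as least squares on the effective labels $\bar y_i=\tfrac{1}{1+\lambda}(y_i+\lambda w_1^Tx_{1i})$ and invokes the high-dimensional characterization $\hat w=\bar w+\bar\sigma\,\Sigma_{22}^{-1/2}g/\sqrt{p(\kappa-1)}$ from \citet{chang2021provable}, which immediately yields a \emph{deterministic} limiting risk $\bar R(\lambda,w_1)$, an explicit rational function of $\lambda$ valid uniformly over bounded $w_1$; expanding that limit in $\lambda$ is then free of any interchange issues, since the smallness threshold for $\lambda$ depends only on population covariances. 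You instead compute the exact finite-$n$ expected risk via $\hat w(\lambda)=\tfrac{1}{1+\lambda}\hat w_{\mathrm{OLS}}+\tfrac{\lambda}{1+\lambda}\hat b$, the conditional identity $\mathbb{E}[\epsilon_i x_{1j}^T\mid X_2]=\delta_{ij}c^T$, and the inverse-Wishart mean — more elementary, with exact constants $p/(n-p-1)$ — but the points you yourself flag (concentration of the random $R$ on its mean, uniform control of the $O(\lambda^2)$ remainder, commuting $\partial_\lambda$ with $n,p\to\infty$) are precisely what the paper's citation buys; to fully close your version you must either supply those concentration arguments or fall back on the same characterization. For the information-theoretic step your argument is genuinely different and arguably nicer: the paper runs a correlation chain $\rho(w_1x_1,w^*x_2)>\rho(w^*x_2,y)\geq \tfrac{\rho(w_1x_1,y)}{\rho(w^*x_2,y)}-\tfrac{1-\rho(w^*x_2,y)^2}{\rho(w^*x_2,y)}\tfrac{\sqrt{\Sigma_{33}}}{\sqrt{w_1^T\Sigma_{11}w_1}}$, which needs both $w_1^T\Sigma_{13}\geq0$ and $w_1^T\Sigma_{11}w_1\leq\Sigma_{33}$, whereas your PSD-determinant bound $\sigma^2(bs-a^2)\geq se^2$ on the scalar triple $(U,V,\epsilon)$ combined with the CCH inequality $a^2>bs^2/\Sigma_{33}$ gives the strictly stronger conclusion $|c^Tw_1|<\sigma^2$ without using $w_1^T\Sigma_{13}\geq0$ at all, showing that assumption is dispensable for the second assertion. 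Finally, for the first assertion your convexity-plus-nonvanishing-gradient argument at $w_1=0$ is essentially the paper's stationarity computation (the paper solves explicitly for the optimal $w_1$, which is richer but equivalent for existence), and both versions implicitly require the nondegeneracy of $\Sigma_{12},\Sigma_{13}$ that you correctly note: if $\Sigma_{12}w^*=0$ and $\Sigma_{13}=0$ the gradient vanishes identically and the claim fails, so your caveat matches the theorem's implicit hypothesis.
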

\begin{proof}
The optimization problem eq. (\ref{eq:old_loss}) is equivalent to
\begin{equation}
\hat{w}:=\arg\min_{w_2}\sum_{i=1}^n \Bigl\|\tilde{y}_i - w_2^T x_{2i}\Bigr\|^2,
\end{equation}
where the effective label is given by
\begin{equation}
\bar{y}_i:=\frac{1}{1+\lambda}(y_i+\lambda w_1^Tx_{1i}).
\end{equation}
It satisfies $\bar{y}_i=\bar{w}^Tx_{2i}+\mathcal{N}(0,\bar{\sigma}^2)$, where
\begin{equation}
\bar{w}:=\frac{1}{1+\lambda}
\Sigma_{22}^{-1}(\Sigma_{23}+\lambda\Sigma_{12}^Tw_1)
\end{equation}
and
\begin{equation}
\tilde{\sigma}^2:=\mathbb{E}[\bar{y}_n^2]-\bar{w}^T\Sigma_{22}\bar{w}.
\end{equation}
According to Theorem 3 of \cite{chang2021provable}, the estimator $\hat{w}$ can be expressed asymptotically as
\begin{equation}
\hat{w}=\bar{w}+\bar{\sigma}\frac{\Sigma_{22}^{-1/2}g}{\sqrt{p(\kappa-1)}},
\end{equation}
where $g\sim\mathcal{N}(0,I_p)$. Thus the asymptotics of $R(\lambda,w_1)$ is
\begin{equation}
\begin{aligned}
\bar{R}(\lambda,w_1)&=(\bar{w}-w^*)\Sigma_{22}(\bar{w}-w^*)+\tilde{\sigma}^2\frac{1}{\kappa-1}\\
&=\frac{\lambda^2}{(1+\lambda)^2}(
\Sigma_{22}^{-1}\Sigma_{12}^Tw_1-w^*)^T\Sigma_{22}(\Sigma_{22}^{-1}\Sigma_{12}^Tw_1-w^*)\\
&\quad+\frac{1}{\kappa-1}\frac{1}{(1+\lambda)^2}[\Sigma_{33}-(w^*)^T\Sigma_{22} w^*+2\lambda w_1^T(\Sigma_{13}-\Sigma_{12} w^*)+\lambda^2w_1^T(\Sigma_{11}-\Sigma_{12}\Sigma_{22}^{-1}\Sigma_{12}^T)w_1],
\end{aligned}
\end{equation}
where we denote $w^*=\Sigma_{22}^{-1}\Sigma_{23}$ to be the optimal weight. Here "asymptotics" means that $\lim_{n,p\to\infty}\mathbb{P}\left(\sup_{||w_1||<M}|R(\lambda,w_1)-\bar R(\lambda,w_1)|>\epsilon\right)=0$ for any $\epsilon>0$. Taking the derivative of $\bar{R}$ w.r.t. $w_1$, we have that the optimal $w_1$ is given by
\begin{equation}
\lambda\left[\Sigma_{12}^T\Sigma_{22}^{-1}\Sigma_{12}+\frac{1}{\kappa-1}(\Sigma_{11}-\Sigma_{12}\Sigma_{22}^{-1}\Sigma_{12}^T)\right]w_1=\lambda\Sigma_{12}^Tw^*-\frac{1}{\kappa-1}(\Sigma_{13}-\Sigma_{12} w^*).
\end{equation}
This gives an optimal $w_1$ for almost every $\lambda$. The optimal $w_1$ is non-zero and different from the optimal teacher weight $w^*$ for almost every $\lambda$. For the special case $\Sigma_{13}-\Sigma_{12}\Sigma_{22}^{-1}\Sigma_{23}=0$ (i.e. $x_1$ and $y$ are independent conditioned on $x_2$), the optimal surrogate weight is given by
\begin{equation}
w_1=(\kappa-1)(\Sigma_{11}+(\kappa-2)\Sigma_{12}\Sigma_{22}^{-1}\Sigma_{12}^T)^{-1}\Sigma_{12}^Tw^*,
\end{equation}
which does not depend on $\lambda$.

Moreover, for small $\lambda$, we have
\begin{equation}
\bar R(\lambda,w_1)=\frac{1}{\kappa-1}\frac{1}{(1+\lambda)^2}(\Sigma_{33}-(w^*)^T\Sigma_{22} w^*)+\frac{2\lambda}{\kappa-1}w_1^T(\Sigma_{13}-\Sigma_{12}w^*)+O(\lambda^2),
\end{equation}
and thus $\bar R(\lambda,w_1)<\bar{R}(0,w_1)$ for small $\lambda$ if 
\begin{equation}
\hat{w}^T(\Sigma_{13}-\Sigma_{12}\Sigma_{22}^{-1}\Sigma_{23})-(\Sigma_{33}-(w^*)^T\Sigma_{22} w^*)<0.
\label{eq:condition}
\end{equation}

Now we define the correlation between $w_1x_1$ and $w^*x_2$ to be 
\begin{equation}
\rho(w_1x_1,w^*x_2):=\frac{w_1^T\Sigma_{12}w^*}{\sqrt{w_1^T\Sigma_{11}w_1}\sqrt{(w^*)^T\Sigma_{22}w^*}}.
\end{equation} 
Similarly we define 
\begin{equation}
\rho(w_1x_1,y):=\frac{w_1^T\Sigma_{13}}{\sqrt{w_1^T\Sigma_{11}w_1}\sqrt{(w^*)^T\Sigma_{22}w^*}}
\end{equation}
and
\begin{equation}
\rho(w^*x_2,y):=\frac{(w^*)^T\Sigma_{23}}{\sqrt{(w^*)^T\Sigma_{22}w^*}\sqrt{\Sigma_{33}}}=\frac{\sqrt{(w^*)^T\Sigma_{22}w^*}}{\sqrt{\Sigma_{33}}}.
\end{equation}
Then the condition \eqref{eq:condition} becomes
\begin{equation}
\rho(w_1x_1,w^*x_2)>\frac{\rho(w_1x_1,y)}{\rho(w^*x_2,y)}-\frac{1-\rho(w^*x_2,y)^2}{\rho(w^*x_2,y)}\frac{\sqrt{\Sigma_{33}}}{\sqrt{w_1^T\Sigma_{11}w_1}}.
\end{equation}
Therefore, if $I(w_1^Tx_1,(w^*)^Tx_2)>I((w^*)^Tx_2,y)$ we have
\begin{equation}
\begin{aligned}
\rho(w_1x_1,w^*x_2)&>\rho(w^*x_2,y)=\frac{1}{\rho(w^*x_2,y)}-\frac{1-\rho(w^*x_2,y)^2}{\rho(w^*x_2,y)}\\&\geq\frac{\rho(w_1x_1,y)}{\rho(w^*x_2,y)}-\frac{1-\rho(w^*x_2,y)^2}{\rho(w^*x_2,y)}\frac{\sqrt{\Sigma_{33}}}{\sqrt{w_1^T\Sigma_{11}w_1}}.
\end{aligned}
\end{equation}
Thus the condition \eqref{eq:condition} is satisfied and we have $\bar{R}(\lambda,w_1)<\bar{R}(0,w_1)$. For the first inequality we use $I(A,B)=-\frac{1}{2}\log(1-\rho(A,B)^2)$ for Gaussian variables $A,B$ and the fact that $\rho(w^*x_2,y),\rho(w_1x_1,y)\geq0$ if $w_1^T\Sigma_{13}\geq0$. The last inequality is from $\rho(w_1x_1,y)\leq1$ and $\frac{\sqrt{\Sigma_{33}}}{\sqrt{w_1^T\Sigma_{11}w_1}}\leq1$. This finishes the proof.
\end{proof}

For completeness we also prove that knowledge distillation might help in the overparameterization regime.
\begin{theorem}
For $\kappa<1$ and almost every $\lambda$, there also exists $w_1$ such that $R(\lambda,w_1)<R(\lambda,0)$ asymptotically.
\end{theorem}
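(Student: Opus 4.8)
The plan is to mirror the proof of Theorem~\ref{theo:smaller_risk}, replacing the underparameterized asymptotics by the ridgeless (minimum-norm interpolation) asymptotics appropriate to $\kappa<1$, and then to exhibit a beneficial $w_1$ by a stationarity argument at $w_1=0$. Because the present statement asks only for the \emph{existence} of some improving $w_1$ (and not for the sharper CCH sufficient condition), the mutual-information / correlation manipulation used in the previous proof is not needed here.

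First I would reuse, verbatim, the reduction of the objective~\eqref{eq:old_loss} to an effective single-target least-squares problem with surrogate target $\bar{w}=\frac{1}{1+\lambda}\Sigma_{22}^{-1}(\Sigma_{23}+\lambda\Sigma_{12}^Tw_1)$ and effective noise variance $\bar{\sigma}^2=\mathbb{E}[\bar{y}_n^2]-\bar{w}^T\Sigma_{22}\bar{w}$. This reduction nowhere uses the sign of $\kappa-1$; the only difference is that for $\kappa<1$ the linear system is underdetermined, so $\hat{w}$ is the minimum-norm interpolant of the effective labels. The structural features I will exploit are exactly as before: $\bar{w}$ is affine in $w_1$ and $\bar{\sigma}^2$ is quadratic in $w_1$.

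The substantive step is to replace the estimator characterization. When $\kappa<1$ the interpolant no longer concentrates on $\bar{w}$: minimum-norm interpolation shrinks the signal and incurs a genuine bias. I would invoke the $\kappa<1$ branch of~\cite{chang2021provable} (equivalently, the ridgeless least-squares deterministic equivalent) to obtain a uniform deterministic equivalent of the form
\begin{equation}
\bar{R}(\lambda,w_1)=\mathcal{B}(\bar{w},w^*;\Sigma_{22},\kappa)+V(\Sigma_{22},\kappa)\,\bar{\sigma}^2+o(1),
\end{equation}
where the variance multiplier $V(\Sigma_{22},\kappa)>0$ is finite for $\kappa<1$ and, crucially, independent of $w_1$ (it depends only on the fixed design covariance $\Sigma_{22}$ and the aspect ratio, which $w_1$ does not alter), while the bias $\mathcal{B}$ is a quadratic form in the shrunken discrepancy between $\bar{w}$ and the reference optimum $w^*$. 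Since $\bar{w}$ enters affinely and $\bar{\sigma}^2$ quadratically, $\bar{R}(\lambda,\cdot)$ is again a quadratic function of $w_1$. Pinning down $\mathcal{B}$ and $V$ exactly — the shrinkage factor, the fixed-point equation governing $V$ for non-isotropic $\Sigma_{22}$, and the cross-terms arising from the mismatch between the effective target $\bar{w}$ and the reference $w^*$ — is the most delicate part and constitutes the main obstacle; for isotropic cross-covariances one should recover the explicit multipliers $(1-\kappa)$ and $\kappa/(1-\kappa)$.

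Finally, I would argue existence as in the first claim of Theorem~\ref{theo:smaller_risk}. Since $\bar{R}(\lambda,\cdot)$ is quadratic, it suffices to show that $w_1=0$ is not its minimizer, i.e.\ that $\nabla_{w_1}\bar{R}(\lambda,w_1)\big|_{w_1=0}\neq 0$. Clearing the common $(1+\lambda)$ factors exactly as in the derivation of the optimal $w_1$ in the previous proof turns the stationarity condition into an equation whose right-hand side is a non-trivial affine function of $\lambda$ in the fixed data $\Sigma_{12},\Sigma_{13},w^*$ and the multiplier $V$ (the $\frac{1}{\kappa-1}$ appearing there being replaced by the overparameterized multiplier); such a function vanishes for at most one value of $\lambda$. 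Hence for almost every $\lambda$ the minimizer $\tilde{w}$ is nonzero and satisfies $\bar{R}(\lambda,\tilde{w})<\bar{R}(\lambda,0)$. The uniform convergence $\sup_{\|w_1\|<M}|R(\lambda,w_1)-\bar{R}(\lambda,w_1)|\to 0$ then upgrades this to $R(\lambda,\tilde{w})<R(\lambda,0)$ asymptotically, completing the proof.
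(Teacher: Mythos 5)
Your proposal follows the same skeleton as the paper's proof — reduce \eqref{eq:old_loss} to an effective least-squares problem with surrogate signal $\bar{w}$ and noise $\bar{\sigma}^2$, invoke an overparameterized deterministic equivalent for the minimum-norm interpolant, and then optimize over $w_1$ — but the two diverge in the middle step, and in one place your stated form of the equivalent is wrong while your conclusion survives. The paper does not use a ``bias $+\,V\bar{\sigma}^2$'' ansatz: it rescales the effective labels by $\sigma/\bar{\sigma}$ so as to apply Theorem~4 of \cite{ildiz2024high} at fixed noise level, yielding the explicit expression \eqref{eq:bar_R} with shrinkage $\theta_1$, fixed point $\tau$ solving $\kappa=\frac{1}{p}\mathrm{tr}((\Sigma_{22}+\tau I)^{-1}\Sigma_{22})$, and fluctuation scale $\gamma^2(w_s)\propto\sigma^2+\tau^2\|\Sigma_{22}^{1/2}(\Sigma_{22}+\tau I)^{-1}w_s\|^2$. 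That last term is the point where your claim that the variance multiplier $V$ is ``crucially, independent of $w_1$'' fails: for ridgeless interpolation the variance is inflated by a \emph{signal-dependent} term quadratic in $\bar{w}$, hence in $w_1$ — min-norm interpolation has no pure bias/variance separation of the kind you posit. Fortunately this does not sink your argument, because the offending term is a quadratic form in $\bar{w}$ and can be absorbed into your $\mathcal{B}$; indeed, although the paper's expansion \eqref{eq:bar_R} superficially contains terms linear in $\bar{\sigma}$ (i.e.\ square roots of quadratics in $w_1$), the three mean terms recombine exactly into $(\theta\bar{w}-w^*)^T\Sigma_{22}(\theta\bar{w}-w^*)$ with $\theta=(\Sigma_{22}+\tau I)^{-1}\Sigma_{22}$, and the variance contribution is proportional to $\bar{\sigma}^2+\tau^2\|\Sigma_{22}^{1/2}(\Sigma_{22}+\tau I)^{-1}\bar{w}\|^2$, so $\bar{R}(\lambda,\cdot)$ is genuinely quadratic in $w_1$ as your stationarity argument requires (note also that $\tau$ and $\Omega$ depend only on $\Sigma_{22}$ and $\kappa$, so clearing $(1+\lambda)^2$ factors indeed leaves a low-degree polynomial condition in $\lambda$, vanishing for at most finitely many $\lambda$ rather than ``at most one''). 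Interestingly, your final step is \emph{more} explicit than the paper's: the published proof stops after characterizing the optimal $w_1$ as a saddle point of \eqref{eq:bar_R} and never spells out why $w_1=0$ fails to be optimal for almost every $\lambda$, whereas your gradient-at-zero argument supplies exactly that missing verification (modulo an implicit non-degeneracy assumption on $\Sigma_{12},\Sigma_{13}$, which the paper also glosses over). Your citation of the $\kappa<1$ branch of \cite{chang2021provable} should be replaced by \cite{ildiz2024high}, which is what the paper actually uses in this regime.
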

\begin{proof}
For $\kappa<1$ we are in the overparameterization case and thus we consider the minimal norm estimator
\begin{equation}
\hat{w}=\arg\min_w\left\{||w||:\sum_{i=1}^n||\frac{1}{1+\lambda}(y_i+\lambda w_1^Tx_{1i})-w^Tx_{2i}||^2=0\right\}.
\end{equation}
We can rewrite it as
\begin{equation}
\hat{w}=\frac{\bar{\sigma}}{\sigma}\arg\min_w\left\{||w||:\sum_{i=1}^n||\frac{\sigma}{\bar{\sigma}}\bar{y}_i-w^Tx_{2i}||^2=0\right\},
\end{equation}
where we recall that the effective label satisfies $\frac{\sigma}{\bar{\sigma}}\bar{y}_i=\frac{\sigma}{\bar{\sigma}}\bar{w}^Tx_{2i}+\mathcal{N}(0,\sigma^2)$.

Then we can use \cite[Theorem 4]{ildiz2024high} for the function $f(x)=||\Sigma_{22}^{1/2}(\frac{\bar{\sigma}}{\sigma}x-w^*)||^2$ to obtain the following asymptotic excess risk
\begin{equation}
 \begin{aligned}
\bar{R}(\lambda,w_1)=&(w_s-w^*)^T\theta_1^T\Sigma_{22}\theta_1(w_s-w^*)+\gamma(w^s)\mathbb{E}_{g_t}[\theta_2^T\Sigma_{22}\theta_2]\\&+w^*(I-\theta_1)^T\Sigma_{22}(I-\theta_1)w^*-2(w^*)^T(I-\theta_1)^T\Sigma_{22}\theta_1(w_s-w^*),
\end{aligned}
\end{equation}
where we denote $w_s:=\frac{\sigma}{\bar{\sigma}}\bar{w}$ and $\tau$ to be the solution of $\kappa=\frac{1}{p}\text{tr}((\Sigma_{22}+\tau I)^{-1}\Sigma_{22})$,
\begin{equation}
\theta_1:=\frac{\bar{\sigma}}{\sigma}(\Sigma_{22}+\tau I)^{-1}\Sigma_{22},\ \theta_2:=\frac{\bar{\sigma}}{\sigma}(\Sigma_{22}+\tau I)^{-1}\Sigma_{22}^{1/2}\frac{g_t}{\sqrt{p}},
\end{equation}
and $g_t\sim\mathcal{N}(0,I_p)$. Moreover, $\gamma(w_s)$ is given by
\begin{equation}
\gamma^2(w_s)=\kappa^{-1}\frac{\sigma^2+\tau^2||\Sigma_{22}^{1/2}(\Sigma_{22}+\tau I)^{-1}w_s||^2}{1-\frac{1}{n}\text{tr}((\Sigma_{22}+\tau I)^{-2}\Sigma_{22}^2)}.
\end{equation}
The results can be simplified to
\begin{equation}
\begin{aligned}
\bar{R}(\lambda,w_1)=&\frac{\bar{\sigma}^2}{\sigma^2}(w_s-w^*)\Sigma_{22}^3(\Sigma_{22}+\tau I)^{-2}(w_s-w^*)+\frac{\bar{\sigma}^2}{\sigma^2}\Omega\frac{\sigma^2+\tau^2||\Sigma_{22}^{1/2}(\Sigma_{22}+\tau I)^{-1}w_s||^2}{1-\Omega}\\
&-2\frac{\bar{\sigma}}{\sigma}(w^*)^T\Sigma_{22}^2(\Sigma_{22}+\tau I)^{-2}(\Sigma_{22}+\tau I-\frac{\bar{\sigma}}{\sigma}\Sigma_{22})(w_s-w^*)\\&+w^*(\Sigma_{22}+\tau I-\frac{\bar{\sigma}}{\sigma}\Sigma_{22})^2(\Sigma_{22}+\tau I)^{-2}\Sigma_{22}w^*,
\end{aligned}
\label{eq:bar_R}
\end{equation}
where we denote $\Omega:=\frac{1}{n}\text{tr}((\Sigma_{22}+\tau I)^{-2}\Sigma_{22}^2)$. Therefore, the optimal $w_1$ is given by the saddle points of \eqref{eq:bar_R}, where
 \begin{equation}
w_s:=\frac{\sigma}{(1+\lambda)\bar{\sigma}}(w^*+\lambda\Sigma_{22}^{-1}\Sigma_{12}^Tw_1)
\end{equation}
and
\begin{equation}
\bar{\sigma}:=\frac{1}{1+\lambda}\sqrt{\sigma^2+2\lambda w_1^T(\Sigma_{13}-\Sigma_{12} w^*)+\lambda^2w_1^T(\Sigma_{11}-\Sigma_{12}\Sigma_{22}^{-1}\Sigma_{12}^T)w_1}.
\end{equation}
\end{proof}

\section{Experimental details and results for synthetic data}\label{app:synthetic}

We evaluate the Cross-modal Complementarity Hypothesis (CCH) on a controlled synthetic regression benchmark. We generate \(n\) i.i.d.\ samples \(\{(X_{1,i}, X_{2,i}, Y_i)\}_{i=1}^n\) as follows:
\begin{align*}
Y_i &\sim \mathcal{N}(0,1),\\
X_{2,i}\mid Y_i &\sim \mathcal{N}\!\bigl(\sigma_{23} Y_i\,\mathbf{1}_p,\; (1-\sigma_{23}^2) I_p\bigr),\\
X_{1,i}\mid X_{2,i}, Y_i &\sim \mathcal{N}\!\bigl(a\,X_{2,i} + b\,Y_i,\; v\,I_p\bigr),
\end{align*}
where
\[
\phi = 1 - \sigma_{23}^2,\quad
a = \frac{\sigma_{12} - \sigma_{13}\sigma_{23}}{\phi},\quad
b = \frac{\sigma_{13} - \sigma_{12}\sigma_{23}}{\phi},\quad
v = 1 - \frac{\sigma_{12}^2 + \sigma_{13}^2 - 2\,\sigma_{12}\sigma_{13}\sigma_{23}}{\phi}.
\]

Both teacher and student use the fully connected architecture in Table~\ref{tab:synthetic_arch}. We train on \(10000\) samples and hold out \(1000\) for testing. Models are optimized with Adam (learning rate \(0.01\)) for \(300\) epochs; full settings appear in Table~\ref{tab:synthetic_param}.

\begin{figure}[!t]
  \centering
  \begin{subfigure}[b]{0.45\textwidth}
    \includegraphics[width=\linewidth]{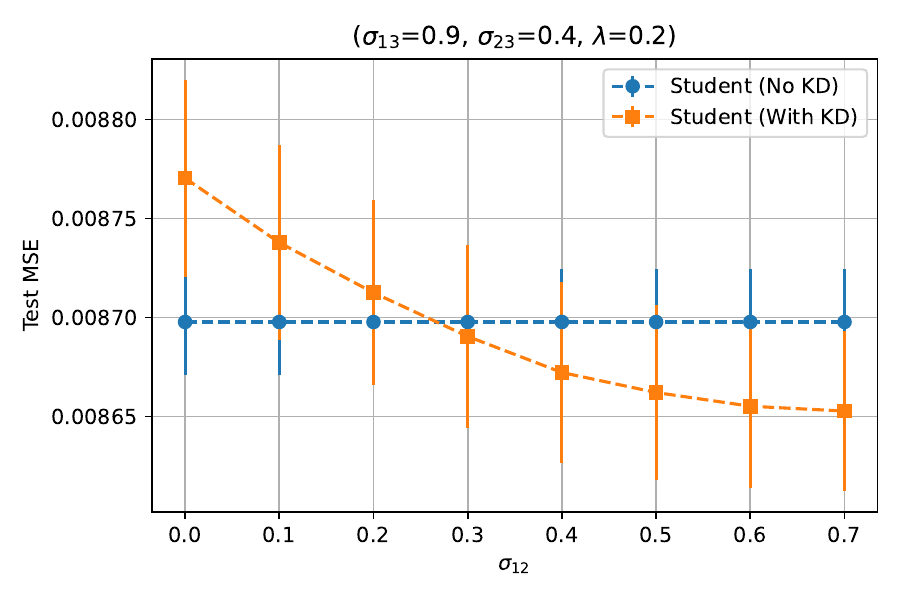}
    \caption{\(\lambda=0.2\)}
  \end{subfigure}
  \hfill
  \begin{subfigure}[b]{0.45\textwidth}
    \includegraphics[width=\linewidth]{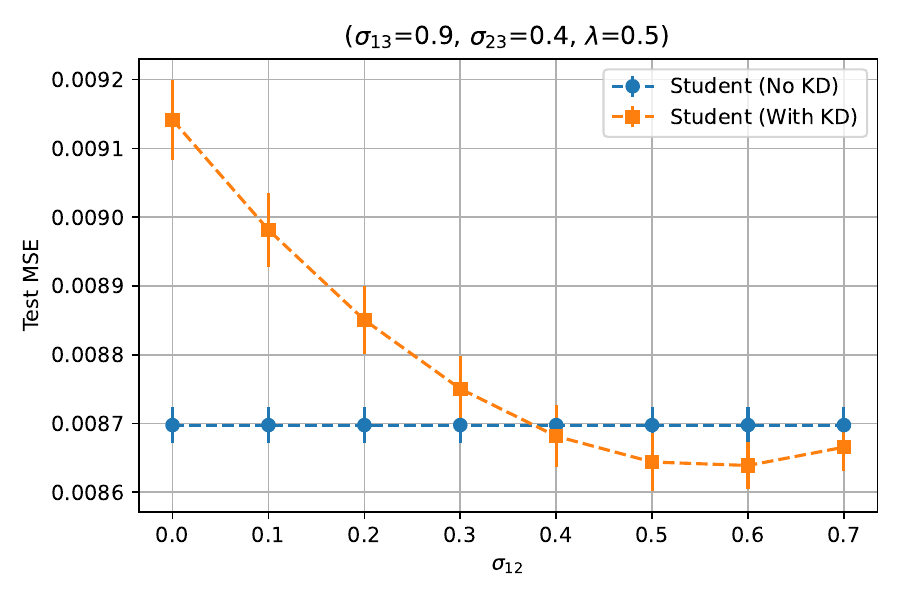}
    \caption{\(\lambda=0.5\)}
  \end{subfigure}

  \vspace{1ex}
  \begin{subfigure}[b]{0.45\textwidth}
    \includegraphics[width=\linewidth]{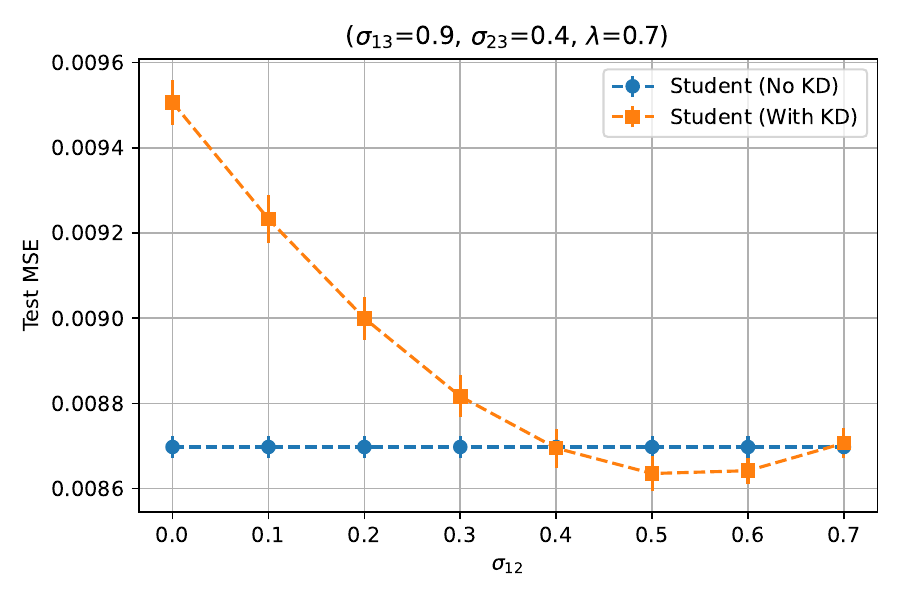}
    \caption{\(\lambda=0.7\)}
  \end{subfigure}
  \hfill
  \begin{subfigure}[b]{0.45\textwidth}
    \includegraphics[width=\linewidth]{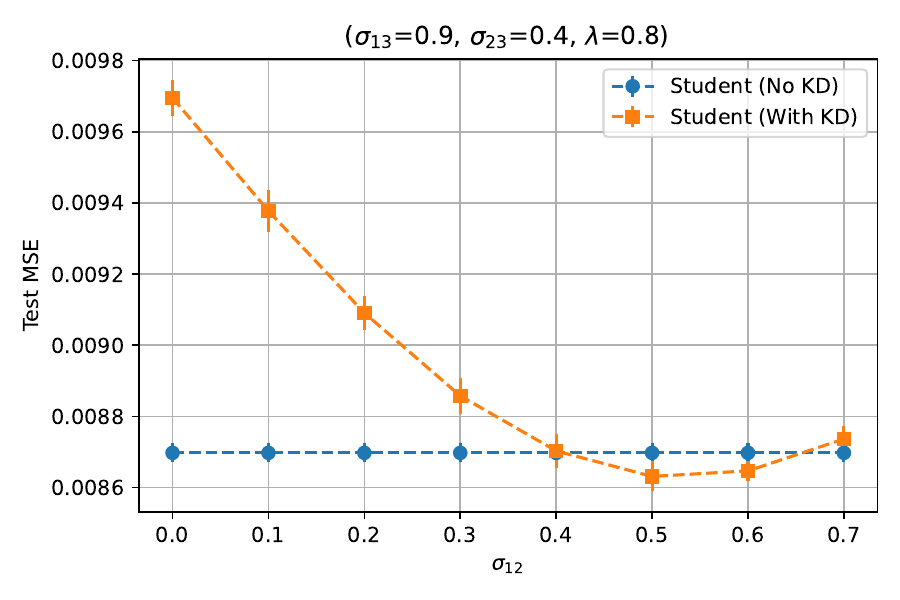}
    \caption{\(\lambda=0.8\)}
  \end{subfigure}

  \caption{Test MSE on synthetic regression data for varying distillation weight \(\lambda\). Orange dashed curves: student with KD; blue dashed curves: student without KD.}
  \label{fig:synthetic_lambda_grid}
\end{figure}

Figure~\ref{fig:synthetic_lambda_grid} reports test mean-squared error (MSE) as a function of the inter-modality correlation \(\sigma_{12}\) for distillation weights \(\lambda \in \{0.2, 0.5, 0.7, 0.8\}\). Because varying only \(\lambda\) does not change the learned representations’ mutual information (MI), the MI curves coincide with those obtained at \(\lambda=0.3\) (see Fig.~\ref{fig:synthetic_nonlinear_lambda05}). From Fig.~\ref{fig:synthetic_lambda_grid}, when \(\sigma_{12}\) is large (e.g., \(\sigma_{12}=0.7\), indicating strong teacher–student alignment), distillation improves the student provided two conditions hold: (i) the CCH criterion \(I(H_1;H_2)>I(H_2;Y)\) and (ii) a sufficiently small \(\lambda\) to avoid over-regularizing toward the teacher. This behavior is consistent with Theorem~\ref{theo:main}.

\begin{table}[!h]
\centering
\caption{Network architecture for synthetic experiments.}
\label{tab:synthetic_arch}
\begin{tabular}{lcc}
\toprule
\textbf{Layer} & \textbf{\# Units} & \textbf{Activation} \\
\midrule
Input & 100 & -- \\
Linear & 64 & ReLU \\
Linear & 1 & -- \\
\bottomrule
\end{tabular}
\end{table}

\begin{table}[!ht]
\centering
\caption{Training configuration and dataset details for synthetic experiments.}
\label{tab:synthetic_param}
\begin{tabular}{lc}
\toprule
\textbf{Item} & \textbf{Value} \\
\midrule
Training dataset & Synthetic Gaussian \\
Train/Test split & 10{,}000 / 5{,}000 \\
Optimizer & Adam \\
Learning rate & 0.01 \\
Epochs & 300 \\
\bottomrule
\end{tabular}
\end{table}

\section{Experimental details and results for image Data}\label{app:image}

We evaluate our approach using the MNIST~\citep{lecun1998gradient} and MNIST-M~\citep{ganin2015unsupervised} datasets. MNIST comprises 70,000 $28\times28$ grayscale images of handwritten digits (0–9). MNIST-M adapts these digits by blending them onto natural-image backgrounds sampled from the BSDS500 dataset~\citep{martin2001database}, resulting in colored images with identical labels (Figure~\ref{fig:mnist}). Below, we detail the MNIST-M construction, the network architecture, training configuration, and additional results for varying blending coefficients.

\begin{figure}[htbp]
  \centering
  \includegraphics[width=0.6\textwidth]{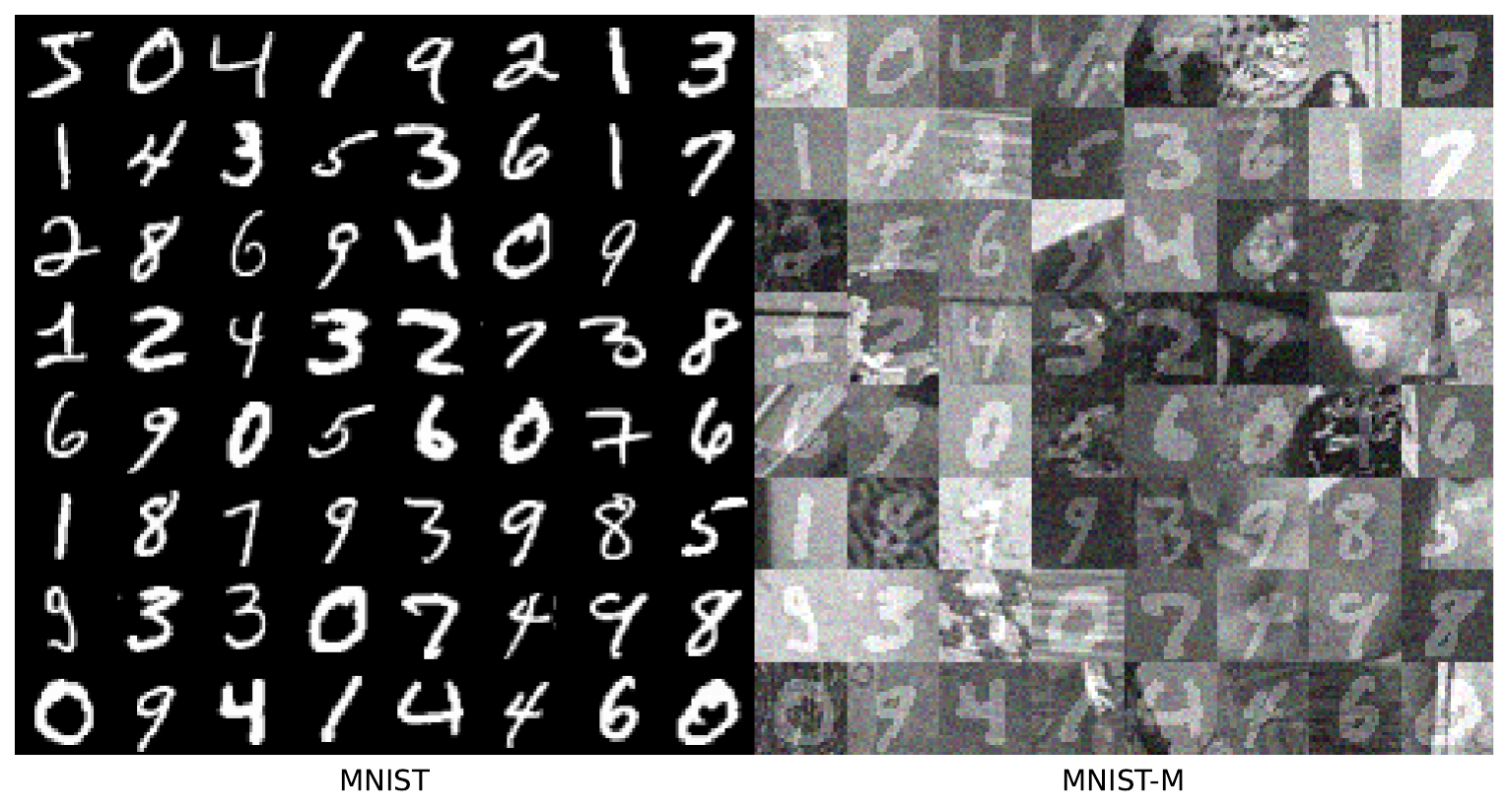}
  \caption{Sample images from MNIST (left) and MNIST-M (right).}
  \label{fig:mnist}
\end{figure}

\begin{algorithm}[ht]
\caption{Cross-modal knowledge distillation protocol for image data}
\label{alg:image_protocol}
\KwIn{MNIST and MNIST-M datasets}
\KwOut{Test accuracy of student with and without distillation}
1: \textbf{Teacher pretraining:} Train a teacher network on MNIST\;  
2: \textbf{Student baseline:} Train a student network on MNIST-M using only ground-truth labels\;  
3: \textbf{Distillation:}\;  
4: \quad Freeze teacher parameters\;  
5: \quad \For{each Gaussian blur level $\gamma$}{  
6: \quad\quad Apply Gaussian blur of intensity $\gamma$ to teacher inputs\;  
7: \quad\quad Obtain soft targets from the frozen teacher\;  
8: \quad\quad Train a new student on MNIST-M using both labels and soft targets (Eq.~\ref{eq:crosskd_classification})\;  
   }  
9: \textbf{Evaluation:} Evaluate both student models on the MNIST-M test set\;  
\end{algorithm}

To generate each MNIST-M image, we first binarize the original MNIST digit via thresholding and replicate the resulting single-channel image across the red, green, and blue channels, ensuring compatibility with RGB-based network architectures while preserving the digit’s grayscale silhouette. We apply a luminance-preserving transformation to convert BSDS500 patches to grayscale, matching the teacher modality. We then extract a random $28\times28$ patch $I_{\mathrm{BSDS}}$ from the processed BSDS500 images and compute:
\begin{equation*}
    I_{\mathrm{MNISTM}} = \alpha\,I_{\mathrm{MNIST}} + (1-\alpha)\,I_{\mathrm{BSDS}},
\end{equation*}
where $\alpha\in[0,1]$ controls the digit’s prominence over the background. Having specified the MNIST-M construction, we conduct training and evaluation according to Algorithm~\ref{alg:image_protocol}. For the experiments in Figure~\ref{fig:accuracy_mi_image} and Table~\ref{tab:accuracy_mi_diff_image}, we set $\alpha=0.2$.

Both teacher and student models share the architecture listed in Table~\ref{tab:append_network_image} and the training parameters in Table~\ref{tab:append_param_image}. We train using stochastic gradient descent (learning rate $0.002$, 100 epochs) with a distillation temperature of $T=3$ and a loss weight $\lambda=0.5$. All experiments were executed on an NVIDIA A100 GPU.

\begin{table}[htbp]
\centering
\caption{Network architecture for image experiments.}
\label{tab:append_network_image}
\begin{tabular}{lcc}
\hline
\textbf{Operation}            & \textbf{Size} & \textbf{Activation} \\
\hline
Input $\to$ Linear layer      & 1024          & LeakyReLU           \\
Linear layer                  & 256           & LeakyReLU           \\
Linear layer                  & 10            & --                  \\
\hline
\end{tabular}
\end{table}

\begin{table}[htbp]
\centering
\caption{Training configuration and dataset details for image experiments.}
\label{tab:append_param_image}
\begin{tabular}{lc}
\hline
\textbf{Training Dataset} & \textbf{MNIST / MNIST-M} \\
\hline
Train/Test Split          & 60000 / 10000             \\
\hline
Optimizer                 & SGD                       \\
Learning Rate             & 0.002                     \\
Epochs                    & 100                       \\
$T$                       & 3                         \\
$\lambda$                 & 0.5                       \\
\hline
\end{tabular}
\end{table}

Table~\ref{tab:mnist_mnistm_alph018} presents results for $\alpha=0.18$ under the same settings. First, the sign of the student accuracy difference (Student Acc Diff) precisely matches that of the mutual-information gap (MI GAP), thereby confirming the CCH. Second, compared to the $\alpha=0.2$ setting shown in Figure~\ref{fig:accuracy_mi_image}, the lower blending weight reduces the mutual information shared between the MNIST (teacher) and MNIST-M (student) modalities. This reduction in shared information corresponds to a diminished—sometimes negative—distillation gain, demonstrating that student performance declines as the teacher–student mutual information decreases.

\begin{table}[htbp]
\centering
\caption{Experimental results for the MNIST/MNIST-M dataset for $\alpha=0.18$. MNIST is the teacher modality and MNIST-M is the student modality. The teacher network achieves a test accuracy score of $0.9812 \pm 0.0003$ and $I(H_{\mathrm{teacher}};Y)=1.9095$.}
\resizebox{0.98\textwidth}{!}{%
\begin{tabular}{ccccccc}
\toprule
Gamma Level & $I(H_{\mathrm{teacher}};H_{\mathrm{student}})$ & $I(H_{\mathrm{student}};Y)$ & Student KD Acc & Student No-KD Acc & MI GAP & Student Acc Diff \\
\midrule
0   & 1.3956 & 1.2685 & $0.8484 \pm  0.0019$ & $0.8338 \pm 0.0034$ & 0.1271  & $0.0146 \pm 0.0052$  \\
0.5 & 1.2949 & 1.2685 & $0.8425 \pm 0.0042$ & $0.8338 \pm 0.0034$ & 0.0264 & $0.0087 \pm 0.0070$\\
1.5 & 1.2533 & 1.2685 & $0.8296 \pm 0.0017$ & $0.8338 \pm 0.0034$ & -0.0152 & $-0.0042 \pm 0.0034$\\
2.5 & 0.9472 & 1.2685 & $0.6216 \pm 0.0243$ & $0.8338 \pm 0.0034$ & -0.3213 & $-0.2122 \pm 0.0232$ \\
3.5 & 0.7817 & 1.2685 & $0.3325 \pm 0.0179$ & $0.8338 \pm 0.0034$ & -0.4868 & $-0.5013 \pm 0.0190$\\
\bottomrule
\end{tabular}%
}
\label{tab:mnist_mnistm_alph018}
\end{table}

\section{Experimental details for CMU-MOSEI Data}\label{app:mosei}

The CMU Multimodal Opinion Sentiment and Emotion Intensity (CMU-MOSEI) dataset contains 23,453 video segments annotated for sentiment and emotion. Each segment includes time-aligned transcriptions, audio, and visual data, providing three distinct modalities. Our preprocessing protocol for these modalities is detailed in the Algorithm~\ref{alg:mosei_protocol}.

\SetNlSty{}{}{:}
\SetKwInOut{Input}{Input}

\begin{algorithm}[htbp]
\caption{MOSEI Preprocessing Protocol}
\label{alg:mosei_protocol}
\Input{CMU-MOSEI utterance-level dataset: text; time-aligned audio \& visual feature streams.}

\textbf{Data \& splits:} Use the official train/validation/test partition. 

\textbf{Text:} Tokenize texts and map tokens to pretrained word embeddings; treat
\emph{one token = one timestep}. 

\ForEach{utterance $u$ in the dataset}{%
  \textbf{Temporal alignment:} Find the first non-padding token index $s$ in text$(u)$; slice \emph{text/audio/vision} to start at $s$ (text defines the time base). 

  \textbf{Length control:} For each modality, truncate to at most $L{=}50$ steps, then
  right-pad with zeros to exactly $L$. 
}

\textbf{Labels:} For classification, set $y{=}1$ if sentiment score $>\!0$, else $y{=}0$. 

\textbf{Batching:} Collate as \emph{(vision, audio, text, label)} to form shapes
$(B,L,D_v)$, $(B,L,D_a)$, $(B,L,D_t)$; labels $(B,1)$; here $D_v=713$, $D_a=74$ and $D_t=300$.

\end{algorithm}

The network architecture is identical for all three modalities and is specified in Table~\ref{tab:mosei_architecture}. The architecture includes a temporal mean-pooling layer, which operates as follows: for a given batch of sequences \(X\in\mathbb{R}^{B\times L\times D}\), the layer averages feature vectors across the time dimension \(L\) to produce an output \(Z\in\mathbb{R}^{B\times D}\), where:
\[
Z_{b,d} \;=\; \frac{1}{L}\sum_{l=1}^{L} X_{b,l,d}
\qquad (b=1,\dots,B;\; d=1,\dots,D).
\]

\begin{table}[htbp]
\centering
\caption{Network architecture for the CMU-MOSEI experiments.}
\begin{tabular}{lcc}
\toprule
\textbf{Operation} & \textbf{Size} & \textbf{Activation} \\
\midrule
Input $(B{\times}L{\times}D)$ $\to$ Temporal Mean-Pool $\to$ Flatten & $B{\times}L{\times}D \to B{\times}D$ & -- \\
Linear Layer & $D \to 256$ & ReLU \\
BatchNorm1d + Dropout $(p{=}0.3)$ & -- & -- \\
Linear Layer & $256 \to 128$ & ReLU \\
BatchNorm1d + Dropout $(p{=}0.3)$ & -- & -- \\
Linear Layer (Classifier Head) & $128 \to 2$ & -- \\
\bottomrule
\end{tabular}
\label{tab:mosei_architecture}
\end{table}

The training configuration details are consistent across all models and are summarized in Table~\ref{tab:mosei_parameters}.

\begin{table}[htbp]
\centering
\caption{Training configuration and dataset details for CMU-MOSEI experiments.}
\label{tab:mosei_parameters}
\begin{tabular}{l c}
\toprule
\textbf{Training Dataset} & \textbf{CMU-MOSEI} \\
\midrule
Train/Validation/Test Split & 70\% / 10\% / 20\% \\
Optimizer & AdamW \\
Learning Rate & 0.0005 \\
LR Schedule & CosineAnnealingLR ($T_{\max}=\,$epochs, $\eta_{\min}=0$) \\
Epochs & 100 \\
Temperature ($T$) & 4.5 \\
Distillation Weight ($\lambda$) & 0.5 \\
\bottomrule
\end{tabular}
\end{table}

\section{Experimental details and results for cancer data}
\label{app:cancer}

For cancer data, Table~\ref{tab:subtypes} summarizes the subtype distributions. For the experiments of Tables~\ref{tab:brca_mRNAcnv}–\ref{tab:lihc_mRNAcnv}, the teacher and student networks share the same architecture used in the synthetic data experiments (see Table~\ref{tab:synthetic_arch}). Table~\ref{tab:append_param_cancer} summarizes the training configurations and dataset splits for the three cancer cohorts.

\begin{table}[ht]
  \centering
  \small
  \caption{Subtype distribution for the BRCA, KIPAN, and LIHC cohorts.}
  \label{tab:subtypes}
  \begin{tabular}{lccc}
    \toprule
     & \textbf{BRCA} & \textbf{KIPAN} & \textbf{LIHC} \\
    \midrule
    \textbf{Subtypes} 
     & \makecell[l]{Normal-like: 44 \\ Basal-like: 129 \\ HER2-enriched: 49 \\ Luminal A: 338 \\ Luminal B: 267} 
     & \makecell[l]{KICH: 63 \\ KIRC: 492 \\ KIRP: 212} 
     & \makecell[l]{Blast-Like: 39 \\ CHOL-Like: 18 \\ Liver-Like: 113} \\
    \bottomrule
  \end{tabular}
\end{table}

\begin{table}[htbp]
\centering
\caption{Training configuration and dataset details for cross-modal distillation experiments on BRCA, LIHC cancer data.}
\label{tab:append_param_cancer}
\begin{tabular}{lc}
\hline
\textbf{Training Dataset} & BRCA, LIHC \\
\hline
Train/Test Split          & 90\% / 10\%     \\
Optimizer                 & Adam            \\
Learning Rate             & 0.01            \\
Epochs                    & 200             \\
Temperature ($T$)         & 2               \\
Distillation Weight ($\lambda$) & 0.5      \\
\hline
\end{tabular}
\end{table}

We evaluated two multimodal fusion strategies: direct fusion and fusion with knowledge distillation (Fusion\,+\,KD) (Table~\ref{tab:fusion-kd-results}). Both strategies adopt the architecture in Table~\ref{tab:append_architecture_cancerfusion}, which uses separate encoders for each modality followed by feature concatenation (see Figure~\ref{fig:fusion}); each encoder comprises 64 units. In the cross-modal distillation protocol (Tables~\ref{tab:brca_mRNAcnv}–\ref{tab:lihc_mRNAcnv}), we pretrained the teacher network on its modality and then used its soft targets to guide the student (Algorithm~\ref{alg:image_protocol}). By contrast, the fusion experiments train both encoders jointly—without teacher pretraining—while applying a distillation loss to transfer knowledge. Table~\ref{tab:append_param_cancerfusion} lists the corresponding training parameters.

\begin{table}[htbp]
\centering
\small
\caption{Layer-by-layer specification for multimodal fusion experiments on cancer data.}
\label{tab:append_architecture_cancerfusion}
\begin{tabularx}{\linewidth}{@{} l l X c X @{}}
\toprule
\textbf{Branch} & \textbf{Layer}   & \textbf{I/O}                                  & \textbf{Act.} & \textbf{Notes}                   \\
\midrule
\textbf{Modality 1} 
                & Linear           & $n_{\mathrm{inputMod1}}\!\rightarrow\!n_{\mathrm{enc}}$ & ReLU          & FC projection                   \\
                & BatchNorm1d      & $n_{\mathrm{enc}}\!\rightarrow\!n_{\mathrm{enc}}$        & —             & Normalization                   \\
                & Dropout          & $n_{\mathrm{enc}}$                                       & —             & $p=0.25$                        \\
\addlinespace
\textbf{Modality 2}
                & Linear           & $n_{\mathrm{inputMod2}}\!\rightarrow\!n_{\mathrm{enc}}$ & ReLU          & FC projection                   \\
                & BatchNorm1d      & $n_{\mathrm{enc}}\!\rightarrow\!n_{\mathrm{enc}}$        & —             & Normalization                   \\
                & Dropout          & $n_{\mathrm{enc}}$                                       & —             & $p=0.25$                        \\
\addlinespace
\textbf{Fusion \& Classification}
                & Concat           & $2\,n_{\mathrm{enc}}$                                    & —             & Merge embeddings                \\
                & Linear (fusion)  & $2\,n_{\mathrm{enc}}\!\rightarrow\!n_{\mathrm{classes}}$ & —             & Joint-feature logits            \\
                & Linear (modality)& $n_{\mathrm{enc}}\!\rightarrow\!n_{\mathrm{classes}}$    & —             & Modality-specific logits        \\
\bottomrule
\end{tabularx}
\end{table}

\begin{table}[htbp]
\centering
\caption{Training configuration and dataset details for multimodal fusion experiments on KIPAN data.}
\label{tab:append_param_cancerfusion}
\begin{tabular}{lc}
\hline
\textbf{Training Dataset} & KIPAN \\
\hline
Train/Test Split          & 90\% / 10\%     \\
Optimizer                 & Adam            \\
Learning Rate             & 0.007           \\
Epochs                    & 200             \\
Temperature ($T$)         & 1               \\
Distillation Weight ($\lambda$) & 0.5      \\
\hline
\end{tabular}
\end{table}

To demonstrate the generality of our approach beyond the KIPAN cohort, we also conducted experiments on BRCA data. Table~\ref{tab:fusion-kd-results-brca} reports the performance metrics for direct fusion and Fusion\,+\,KD, and Table~\ref{tab:append_param_cancerfusion_brca} lists the corresponding training settings. Across all teacher–student pairs, the mutual information between teacher and student representations consistently exceeds that between student representations and labels, and the Fusion+KD strategy outperforms direct fusion, thereby corroborating the CCH.

\begin{table}[htbp]
  \centering
  \scriptsize
  \caption{Overall multimodal performance of direct fusion and Fusion+KD on BRCA, reported with mutual information of modality representations (teacher–label, teacher–student, student–label).}
  \label{tab:fusion-kd-results-brca}
  \setlength\tabcolsep{4pt}
  \resizebox{\textwidth}{!}{%
    \begin{tabular}{lccccccccccc}
      \toprule
       & \multicolumn{3}{c}{Mutual Information} 
       & \multicolumn{4}{c}{Fusion} 
       & \multicolumn{4}{c}{Fusion+KD} \\
      \cmidrule(lr){2-4}\cmidrule(lr){5-8}\cmidrule(lr){9-12}
       & Teacher–Label & Teacher–Student & Student–Label 
       & Acc & AUC & Macro F1 & Weighted F1 
       & Acc & AUC & Macro F1 & Weighted F1 \\
      \midrule
      mRNA (teacher)\\ CNV (student)
       & 1.1081 & 0.5057 & 0.2757 
       & 0.7711 & 0.9157 & 0.6432 & 0.7563 
       & 0.8434 & 0.8610 & 0.6533 & 0.8225 \\
        \hline
      RPPA (teacher)\\ CNV (student)
       & 0.7328 & 0.3367 & 0.2757 
       & 0.5663 & 0.7844 & 0.5604 & 0.5715 
       & 0.6024 & 0.7929 & 0.5897 & 0.6103 \\
      \bottomrule
    \end{tabular}%
  }
\end{table}

\begin{table}[htbp]
\centering
\caption{Training configuration and dataset details for multimodal fusion experiments on BRCA.}
\label{tab:append_param_cancerfusion_brca}
\begin{tabular}{lc}
\hline
\textbf{Training Dataset} & MNIST / MNIST-M \\
\hline
Train/Test Split          & 90\% / 10\%     \\
Optimizer                 & Adam            \\
Learning Rate             & 0.04           \\
Epochs                    & 200             \\
Temperature ($T$)         & 4               \\
Distillation Weight ($\lambda$) & 0.5      \\
\hline
\end{tabular}
\end{table}

\section{Methods for mutual information estimation}
\label{app:mi_estimators}

Mutual information quantifies the dependency between random variables, but its estimation remains challenging, especially when the underlying probability distributions are unknown. Exact mutual information computation is tractable only for small datasets with known distributions. To address this limitation, \citet{ref:kraskov2004estimating} introduced a k-nearest neighbors (kNN) estimator for mutual information between continuous random variables. This estimator was further extended by \citet{ref:ross2014mutual} to handle cases where one variable is discrete and the other continuous---a critical adaptation given that many real-world datasets involve mixed data types. More recent approaches, such as Mutual Information Neural Estimation (MINE) \citep{ref:belghazi2018mutual}, leverage neural networks to estimate mutual information between high-dimensional continuous variables. Additionally, a novel method known as latent mututal information (LMI) has been developed \citep{ref:gowri2024approximating}, which applies a nonparametric mutual information estimator to low-dimensional representations extracted by a theoretically motivated model architecture.

\end{document}